\newcommand{\subparagraph}{}
\newcommand{\vast}{\bBigg@{2.6}}
\newcommand{\Vast}{\bBigg@{4.6}}
\newcommand{\ignore}[1]{}
\newcommand{\algorithmfootnote}[2][\footnotesize]{%
	\let\old@algocf@finish\@algocf@finish
	\def\@algocf@finish{\old@algocf@finish
		\leavevmode\rlap{\begin{minipage}{\linewidth}
				#1#2
		\end{minipage}}%
	}%
}
\newtheorem{theorem}{Theorem}
\newtheorem{lemma}{Lemma}
\providecommand{\red}{\textcolor[rgb]{1.00,0.00,0.00}}
\begin{document}
	
	\title{Task Allocation for Asynchronous Mobile Edge Learning with Delay and Energy Constraints}
	
		\author{Umair~Mohammad,~\IEEEmembership{Student Member,~IEEE,}
		Sameh~Sorour,~\IEEEmembership{Senior Member,~IEEE,}
		and Mohamed~Hefeida,~\IEEEmembership{Senior Member,~IEEE}
		\IEEEcompsocitemizethanks{
			\IEEEcompsocthanksitem U. Mohammad is with the School of Computing and Information Science, Florida International University, Miami, FL, USA, 33172. \protect \\ E-mail: umair.mohammad@fiu.edu
			\IEEEcompsocthanksitem S. Sorour is with the School of Computing, Queen’s University, Kingston,ON, Canada, K7L 3N6. \protect \\ E-mail: sameh.sorour@queensu.ca
			\IEEEcompsocthanksitem M. Hefeida is with the Department of Computer Science and Electrical Engineering, West Virginia University, Morgantown, WV, USA, 26506. \protect \\ E-mail: mohamed.hefeida@mail.wvu.edu
			}
	}
	
	\markboth{Journal of \LaTeX\ Class Files,~Vol.~14, No.~8, August~2015}%
	{Shell \MakeLowercase{\textit{et al.}}: Bare Advanced Demo of IEEEtran.cls for IEEE Computer Society Journals}

	\IEEEtitleabstractindextext{%
		\begin{abstract}
This paper extends the paradigm of "mobile edge learning (MEL)" by designing an optimal task allocation scheme for training a machine learning model in an asynchronous manner across mutiple edge nodes or learners connected via a resource-constrained wireless edge network. The optimization is done such that the portion of the task allotted to each learner is completed within a given global delay constraint and a local maximum energy consumption limit. The time and energy consumed are related directly to the heterogeneous communication and computational capabilities of the learners; i.e. the proposed model is heterogeneity aware (HA). Because the resulting optimization is an NP-hard quadratically-constrained integer linear program (QCILP), a two-step suggest-and-improve (SAI) solution is proposed based on using the solution of the relaxed synchronous problem to obtain the solution to the asynchronous problem. The proposed HA asynchronous (HA-Asyn) approach is compared against the HA synchronous (HA-Sync) scheme and the heterogeneity unaware (HU) equal batch allocation scheme. Results from a system of 20 learners tested for various completion time and energy consumption constraints show that the proposed HA-Asyn method works better than the HU synchronous/asynchronous (HU-Sync/Asyn) approach and can provide gains of up-to 25\% compared to the HA-Sync scheme. 
		\end{abstract}
		
		\begin{IEEEkeywords}
			Mobile Edge Learning, Dynamic Task Allocation, Distributed Machine Learning, Mobile Edge Computing.
	\end{IEEEkeywords}}
	
	\maketitle
	\IEEEpeerreviewmaketitle	
	\IEEEdisplaynontitleabstractindextext
	
	\ifCLASSOPTIONcompsoc
	\IEEEraisesectionheading{\section{Introduction}\label{Section1__Introduction}}
	\else
	\section{Introduction}
	\label{Section1_Introduction}
	\fi
	Mobile edge computing (MEC) is proving to be a successful paradigm for dealing with the computational challenges raised by the era of the internet of everything (IoE). With the world embracing smart architecture including smart cities, smart grids, smart homes, etc., 41 billion internet of things (IoT) devices are expected be connected to the internet by 2022 \cite{N202003_vXchnge_IoT_data}. Examples of end user devices/edge nodes include smart phones, traffic cameras, autonomous connected vehicles, unmanned aerial vehicles (UAVs), etc. Furthermore, Cisco estimates that this equipment will generate up-to 800 zettabytes ($\sim 10^{21}$) of data \cite{N202003_vXchnge_IoT_data}. 
	
	To serve a useful purpose, this data needs to be analyzed. 
	Transferring these monumental amounts of data generated on devices mostly connected via wireless edge networks spread across vast geographical regions to cloud servers for analysis via multiple backhaul links is time-consuming, costly, and raises security and privacy concerns \cite{Ref1_Original}. Therefore, it is expected that 90\% of data analytics will be done on either edge processors using MEC or on the end devices themselves using hierarchical MEC (H-MEC) \cite{Ref2_Original}. 
	
	This paradigm of task computation at the edge is supported by the latest works in MEC/H-MEC \cite{P0_Letaief_Survey, RefsPap_MEC126, Liu2017a,Moha1812:Multi}. It is expected that machine learning (ML) tasks will comprise a large part of H-MEC computations because ML has shown to provide superior performance in many data analytics applications such as predictive modeling, object recognition and image segmentation. Such applications are expected to form the basis of Edge Artificial Intelligence (Edge AI).
	
	Many ML techniques, including regression, support vector machine (SVM) and neural networks (NN) are built on gradient-based learning.
	This usually involves iterative updates of the ML model parameters based on the gradient of a loss function that is defined according to the ML model. Because such iterative approaches can be exhaustive for a single device, distributed learning (DL) has been proposed to training ML algorithms over multiple learners. There are two approaches possible: training an ML model on a large dataset in a distributed manner where subsets of the data are located across multiple learners (data parallelism) or to train very large models distributedly on a single dataset located at each learner (model parallelism). Although both options apply to the wireless edge and our work does support MP, most of the discussion focuses on the DP scenario. 
	However, MP support will be highlighted whenever appropriate. 
		
	DL has been widely investigated over wired/non-heterogeneous computing and communication environments \cite{GoogleDistWired, StalenessAwarePaper,J1704_WiredCommOptDistSVM_TPDS_YouVuduc,J1811_WiredEthSparkDistDL_TPDS_LangerRahayu}. Recently, researchers have turned their attention to deploying DL models for training on nodes or learners connected via the wireless edge \cite{Wang2019,Tuor_01_AdaptiveControl,Tuor_02_2018_DemoAbstract,Tuor04,Mohammad2019a,Mohammad2019_AsynArxiv,Moha2020b,2019arXiv190907972C,2019arXiv191102417Y,AsyncFedOpt_Proof,J1909_TPDS_DynSelTun_KarnElfadel,J201905_CommEffAlgSVM_TPDS_DassRabi}. 	
	DL over the wireless edge, or mobile edge learning (MEL) as we term it, is motivated by two distinct yet practical scenarios: federated learning (FL) and parallelized learning (PL). 
	
	In FL, the learners own their local dataset. A central hub called an orchestrator initiates the learning process by sending the global model parameter set to each learner, where each learner, in parallel, performs the updates on the ML model locally using its private dataset, and sends back the locally updated model. The orchestrator collects the local ML models from each learner, does the global update/aggregation, and sends back the optimal ML model to each learner for the next cycle until a stopping criteria is reached. We refer to one such cycle as the global update cycle. Typically, FL has been proposed in literature to preserve data privacy \cite{HarvardDistributed,J1808_ModelPrivacy_TPDS_JiaFang}. 
	
	On the other hand, PL usually involves an orchestrator that parallelizes the learning process on its locally owned dataset over multiple learners. The difference in PL is that at the start of each global cycle, the orchestrator also sends a subset of the data on which to learn along with the model parameters. This scenario is mainly useful for cases where the orchestrator may have enough data but not the processing capability to learn on such a large dataset. It can leverage the communication and computation capacity of other learners to train the ML model distributedly. The two distinct approaches are summarized in Figure \ref{figure01blabel}.	
	\begin{figure}[t!]
		\centering
		\includegraphics[width=\linewidth,keepaspectratio]{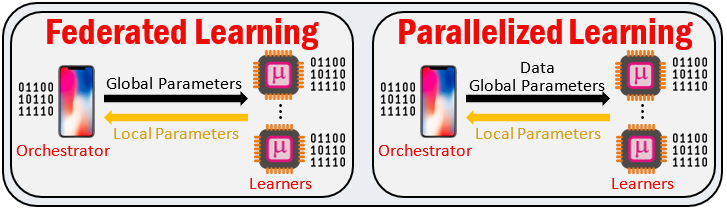}
		\caption{Illustration of the two distinct approaches of FL and PL.}
		\label{figure01blabel}
	\end{figure}
	
	Deploying ML models for MEL presents unique challenges because the wireless edge is resource-constrained from various aspects. For example, communication resources are limited in terms of bandwidth and available channels. Most end devices are battery operated and therefore limited in energy. Simultaneously, the tasks may need to be completed within very short times. In contrast to multicore processor platforms connected via wired networks such as high performance computing (HPC) systems, the different nodes in edge networks have widely varying computing and communication capabilities. Therefore, it is vital to study the impact of these constraints and heterogeneous resources on MEL. Some of the factors affecting the loss or accuracy of the ML model in MEL include the number of total updates (including cycles at which global aggregation occurs), frequency of global updates, and the data distribution at each learner. These parameters are constrained by the available resources.   
	
	Although FL has been extensively studied in literature \cite{Wang2019,Tuor_01_AdaptiveControl,Tuor_02_2018_DemoAbstract,Tuor04,2019arXiv190907972C,2019arXiv191102417Y}, PL in particular or in general, MEL which comprises both, have been sparsely studied \cite{Mohammad2019a,Moha2020b,Mohammad2019_AsynArxiv}. Some works have proposed algorithms for edge DL without specifically focusing on the resource allocation issues \cite{J1909_TPDS_DynSelTun_KarnElfadel,J201905_CommEffAlgSVM_TPDS_DassRabi}. 		
	Most works on edge FL that consider resource consumption only do so in generic terms \cite{Wang2019,Tuor_01_AdaptiveControl,Tuor_02_2018_DemoAbstract,Tuor04} without considering the heterogeneities, i.e. they are heterogeneity unaware (HU). Although the works of \cite{2019arXiv190907972C,2019arXiv191102417Y} are heterogeneity aware (HA), they ignore the aspect of batch allocation and the PL scenario. The implications of wireless computation/communication heterogeneity on optimizing batch allocation to different learners for maximizing accuracy while satisfying a delay constraint were studied in \cite{Mohammad2019a} and dual time and energy constraints in \cite{Moha2020b}.
	
	Typically in MEL, the orchestrator waits for all learners to complete an equal number of iterations of the ML training algorithm and hence, we call this the synchronous approach. The above referenced works mainly deal with the synchronous approaches such as the adaptive control algorithm of \cite{Wang2019} and the HA-Sync scheme of \cite{Mohammad2019a}. 
	Recently, some work has been carried out on the asynchronous scenario by allowing some staleness between the local model parameters and the locally calculated gradients so that powerful devices with good communication links may actually provide a faster validation accuracy progression. This has been proposed for wired networks \cite{StalenessAwarePaper} and for training over the wireless edge \cite{AsyncFedOpt_Proof, Mohammad2019_AsynArxiv}. However, the work in \cite{AsyncFedOpt_Proof} does not cover the impact of the physical layer whereas \cite{Mohammad2019_AsynArxiv} does not include the impact of energy constraints. 
	
	To the best of the authors' knowledge, this work is the first attempt to have a staleness aware algorithm for asynchronous MEL with both, time and energy constraints. Here, we emphasize that our model is different than the models in \cite{StalenessAwarePaper, AsyncFedOpt_Proof} such that the system in our proposed asynchronous scheme, learner performs still reports back the local model parameters within a pre-set duration. However, individually ,the learners may have performed a different number of local updates within this duration and therefore, the system is asynchronous in terms of the number of updates. This approach will make sure that the aggregation is done uniformly for all learners without being affected by stragglers or bad-performing learners. The novelty is in the fact that the number of local updates and the local dataset size on which a learner trains the local model are jointly optimized. Furthermore, it is guaranteed that the task will complete within a given duration without exceeding local energy consumption limits for each learner. 
	
	To this end, the problem for MEL is first formulated as quadratically-constrained integer linear problem (QCILP), which is an NP-hard problem. 
	It is shown that FL is just a subset of the PL approach with the component of batch re-transmission from the orchestrator to each learner removed. 
	Thus, the MEL model discussion focuses on the more general PL scenario but all variations for the FL scenario will be clarified whenever needed. 
	A two-step solution is then proposed based on a relaxation and suggest and improve (SAI) approach. 
	The merits of the proposed solution are shown by comparing its performances to both,the HA-Sync and the HU equal task allocation approach of \cite{Tuor_01_AdaptiveControl,Tuor_02_2018_DemoAbstract}. 
	
	\subsection{Contributions}
	
	This paper extends the work on MEL in the following ways:
	\begin{enumerate}
		\item In contrast to the work on asynchronous MEL in \cite{Mohammad2019_AsynArxiv} and the work on dual time and energy constraints for synchronous MEL \cite{Moha2020b}, this paper provides solutions to facilitate asynchronous MEL with dual time and energy constraints while being HA.
		\item A new way to model the problem for asynchronous MEL is proposed in Section \ref{Sec04__Problem}. In contrast to \cite{Mohammad2019_AsynArxiv}, the objective is now to maximize local updates while controlling staleness. A comprehensive proof on the benefits of maximizing the average number of local updates while controlling the staleness is provided.
		\item A novel two-step solution is proposed in Section \ref{Section5__Solution} where the loose upper bounds on the solution to the synchronous problem are used as initial conditions to the asynchronous problem (HA-Asyn).
		\item Through extensive simulations under varying time and energy constraints, the impact of different levels of staleness is studied and the gains achieved compared to the HA-Sync scheme of \cite{Moha2020b} are quantified in Section \ref{Sec06__SimulationResults}. Moreover, it is shown that in some cases, the proposed HA-Asyn provides a better solution to the HA-Sync problem, and that HA schemes in general work better than the HU scheme.
		\item Section \ref{Sec06__SimulationResults} also summarizes the results by providing recommendations on the best scheme for different scenarios and more importantly, suggestions are given on selecting the best staleness level for the HA-Asyn scheme.
	\end{enumerate}
	
	The rest of the paper is organized as follows: Section \ref{Section03__SystemModelParameters} presents system model including the general DL followed by a transition to MEL. The problem of interest in this paper is formulated in Section \ref{Sec04__Problem} and our proposed solution for this problem is detailed in Section \ref{Section5__Solution}. Section \ref{Sec06__SimulationResults} illustrates the testing results and Section \ref{Sec07__Conclusion} concludes the paper. 
	
	\section{System Model for MEL}
	\label{Section03__SystemModelParameters}
	\subsection{Gradient-based DL Preliminaries}
	Consider a dataset $\mathcal{D}$ that consists of $d$ samples which can be used to train an ML model where each sample $n$ for $n=1,\ldots,d$ has a set of $\mathcal{F}$ features denoted by $\mathbf{x}_n$ and a target $y_n$. In ML, the objective is to find the relationship between $\mathbf{x}_n$ and $y_n$ using a set of parameters $\mathbf{w}$ such that a loss function, $F\left(\mathbf{x}_n, \mathbf{y}_n, \mathbf{w}\right)$, or $F_n\left(\mathbf{w}\right)$ for short because $\mathbf{x}_n$ and $y_n$ are known, is minimized. Because it is generally difficult to find an analytical solution, typically an iterative gradient descent approach is used to optimize the set of model parameters such that $\mathbf{w}[l+1] = \mathbf{w}[l] - \eta\nabla F\left(\mathbf{w}[l]\right)$ where $l$ represents the time step or iteration and $\eta$ is the learning rate typically set on the interval $(0,1)$. 	
	In deterministic gradient descent (DGD), the ML model goes over each sample one-by-one, or more commonly, batch-by-batch using a mini-batch approach, until it reaches sample \# $d$; completing one epoch. 
	If data is re-shuffled randomly in between epochs, this method is known as stochastic GD (SGD). A total of $L$ epochs may be performed depending on the stopping criteria. 
	
	\begin{figure}[t!]
		\centering
		\includegraphics[width=1\linewidth]{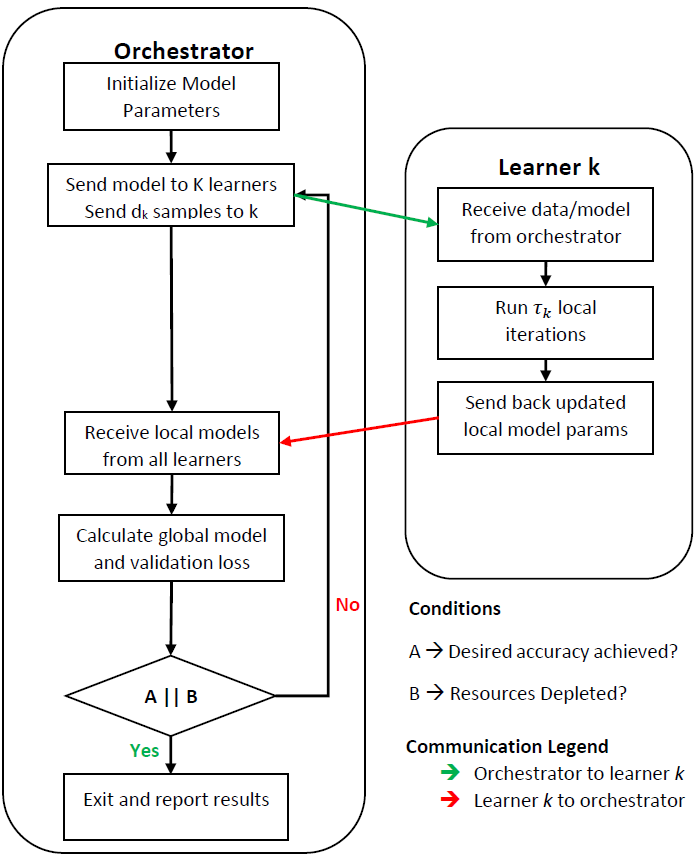}
		\caption{Illustration of the DL process with DP}
		\label{figure01label}
	\end{figure}

	Consider the case where there exists one centralized controller or orchestrator that trains an ML model to solve a specific problem (classification, prediction, image segmentation, etc.) on a set of $\mathcal{K} = \{1,~\ldots,~k,~\ldots,~K\}$ learners. In DL with DP, a batch of the data $\mathcal{D}_k$ of size $d_k$ is present at each individual learner $k$ which may be locally owned or supplied by the orchestrator. The orchestrator initiates the learning process by sending a global model $w$ and possibly $d_k$ samples to each learner $k$. Each learner $k$ applies the gradient descent approach to the local model $\mathbf{w}_k$ in parallel as shown in (\ref{Eq_Sec03_LocalModel}), and sends back the local models to the orchestrator for global aggregation. One such cycle is can be called the global update cycle.
	\begin{equation}
	\mathbf{w}_k[l+1] = \mathbf{w}_k[l]-\eta \nabla F_k(\mathbf{w}_k[l]) 
	\label{Eq_Sec03_LocalModel}
	\end{equation}
	The local model parameter at learner $k$ is given by $\mathbf{w}_k$, the local loss is given by $F_k$, and $\eta$ is the learning rate. At time-step $l+1$, the local model $\mathbf{w}_k[l+1]$ depends on the model $\mathbf{w}_k[l]$ at previous step $l$ and the gradient of the local loss $\nabla F_k(\mathbf{w})$. The local loss $F_k ~\forall ~ k\in\mathcal{K}$ can be calculated using the local dataset $\mathcal{D}_k$ of size $d_k$ in the following way \cite{Wang2019}:
	\begin{equation}
	F_k(\mathbf{w}) = \dfrac{1}{d_k} \sum_{n=1}^{d_k} F_n(\mathbf{w})
	\label{Eq_Sec03_LocalLoss}
	\end{equation} 
	
	However, the global optimal model parameter set $\mathbf{w}$ will only be visible to the learners after a global aggregation which may occur at any arbitrary time-step $l$ for $l = 1,~\ldots,~L$. In the synchronous case, at all learners $k$, a global aggregation occurs after $\tau$ time-steps; at that instant, $\mathbf{w}_k = \mathbf{w} ~ \forall ~ k \in \mathcal{K}$. In the asynchronous case, the global aggregations will occur after potentially different $\tau_k$ updates for each learner $k ~ \forall ~ k\in \mathcal{K}$. The globally optimal model parameter set can be obtained by applying the following aggregation mechanism \cite{Wang2019}:
	\begin{equation}
	\mathbf{w} = \dfrac{1}{d}\sum_{k=1}^{K}{d_k \mathbf{w}_k}  
	\label{Eq_Sec03_GlobalAgg}
	\end{equation}
	The orchestrator may perform multiple global cycles until a stopping criteria is reached such as an accuracy threshold or resource depletion. This process is summarized in Fig. \ref{figure01label}.

	\subsection{Transition to MEL}
	An MEL system consists of an orchestrator and $K$ learners where $d_k$ data samples allocated to learner $k$, $k \in \kappa = \{ 1, 2, \dots, K\}$ so that it performs $\tau_k$ learning iterations. 
	Each learner has a computational capacity of $f_k$ in Hz, an associated communication channel $h_{k0}$ to the orchestrator. An example of an MEL system is illustrated in fig. \ref{figure02label}. 	
	\begin{figure}[t!]
		\centering
		\includegraphics[width=1\linewidth]{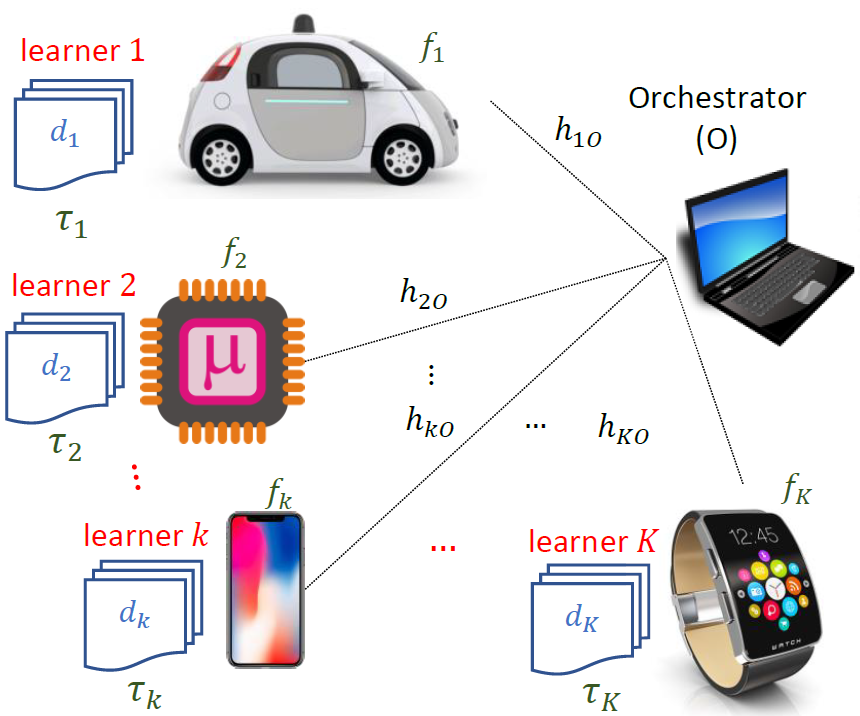}
		\caption{System model of a MEL setting}
		\label{figure02label}
	\end{figure}

	In MEL, one approach is to have all $K$ learners do $\tau_k = \tau$ local updates of the model parameters $\mathbf{w}_k ~ \forall k \in \mathcal{K}$; this is the synchronous approach presented in \cite{Moha2020b}. The other approach is to optimize $\tau_k$ for each learner while controlling the difference among the number of local updates done by different learners. Here, we define staleness $s_{k,l}$ as the difference between the number of updates done by two arbitrary learners $k$ and $l$ in the following way:
	   \begin{equation}
	   s_{k,l} = \tau_k - \tau_l \quad \mbox{if} \begin{cases} k ~ \& ~ l ~ \in \mathcal{K} \mid l\neq k  \\ \tau_k \geq \tau_l \end{cases}
	   \label{eq_staleness_definition}
	   \end{equation}
		
	Furthermore, as described in section \ref{Section1__Introduction}, there are two possibilities for MEL. In FL, the orchestrator and each learner $k$ exchange model parameters only whereas in PL, the orchestrator also sends $d_k$ data samples along with the global model $\mathbf{w}$ at the beginning of each global cycle. It can be noted here that FL is a subset of PL where the orchestrator only sends model parameters instead of both, data and model parameters. The remaining steps are the same as described in Fig. \ref{figure01label}.
	
	Given the above descriptions of the learning process, the orchestrator requires $t_k^S$ seconds to transmit the model parameters and/or the batch $d_k$ for learner $k$. Please note that the energy consumed by the orchestrator during transmission is ignored because we are concerned with the energy consumption of the learners only. Learner $k$ consumes $t_k^C$ seconds and energy $e_k^C$ to perform one learning iteration over it's allotted dataset $d_k$. It then consumes a further time $t_k^R$ and energy $e_k^R$ to send back the updated local model parameters $\mathbf{w_k}$. Therefore, for each learner, the time taken and the energy consumed for the local learning process is given by (\ref{Eq_1_timeForLearnerk}) and (\ref{Eq_2_energyForLearnerk}), respectively.
	\begin{equation}
	\label{Eq_1_timeForLearnerk}
	t_k = t_{k}^S+\tau_k t_k^C + t_k^R   
	\end{equation}
	\begin{equation}
	\label{Eq_2_energyForLearnerk}
	e_k = \tau_k e_k^C + e_k^R   
	\end{equation}
	The size of the model parameter set $\mathbf{w}_k$ is given by $B_k^{model} = \mathcal{P}_m\left(d_kS_d+S_m\right)$, where $\mathcal{P}_m$ is the model bit precision. $\mathcal{P}_m$ and $\mathcal{P}_d$ may include compression ratio for efficient storage.  $S_m$ represents the constant model size in terms of the ML model parameters whereas the term $d_kS_d$ is to support MP where $S_d$ is used to relate the model to the batch-size. 
	Each learner $k$ sends its locally updated model of size $B_k^{model}$ to the orchestrator with power $P_{ko}$ over a wireless channel of bandwidth $W$ and gain $h_{ko}$. Assuming that an ML model of computational complexity $C_m$, $X_k = d_k C_m$ computations occur per local update. 
	The time taken for learner $k$ to transmit the latest model parameters is given by: 
	\begin{equation}
	e_k^{R} = \dfrac{P_{k0}B_k^{model}}{R_{k}} =  \dfrac{\mathcal{P}_m \left(d_k\mathcal{S}_d+\mathcal{S}_m\right)}{W\log_2\left(1+\frac{P_{ko} h_{ko}}{N_0}\right)}, ~ k \in \mathcal{K}
	\label{eq:edge_energy_receiving}
	\end{equation}
	Let us represent learner $k$'s processing power by $f_k$ in GHz. The energy consumed by learner $k$ in each iteration of the learning iteration on a sample size of $d_k$ is given by:
	\begin{equation}
	e_k^{C} = \mu X_k {f_k}^{\nu- 1} = \mu d_k C_m {f_k}^{\zeta - 1}, ~ k \in \mathcal{K} \text{\cite{Moha1812:Multi}}
	\end{equation}
	where $\mu$ is the on-board chip capacitance (typically $10^{-11}$ F) and $\nu = 2$.
	
	The total energy consumed by learner $k$ in one global update cycle can be given by:
	\begin{align}
	\label{Eq_12_energyForLearnerk}
	e_k &= \tau_k e_k^C + e_k^R   \nonumber\\
	& = \dfrac{P_{k0}\mathcal{P}_m \left(d_k\mathcal{S}_d+\mathcal{S}_m\right)}{W\log_2\left(1+\frac{P_{ko} h_{ko}}{N_0}\right)} + \tau d_k \mu C_m {f_k}^{\zeta - 1}
	\end{align}	
	The total energy used by learner $k$ can be expressed as:
	\begin{equation}
	e_k = G_k^2  \tau_k d_k + G_k^1 d_k + G_k^0
	\label{Eq_16_G1isCompact}
	\end{equation}
	where the coefficients $G_k^2 = \mu C_m {f_k}^{\zeta - 1}$, $G_k^1 = \frac{P_{k0}\mathcal{P}_m\mathcal{S}_d}{W\log_2\left(1+\frac{P_{ko} h_{ko}}{N_0}\right)}$, and $G_k^0 = \frac{P_{k0}\mathcal{P}_m\mathcal{S}_m}{W\log_2\left(1+\frac{P_{ko} h_{ko}}{N_0}\right)} $.
		
	The size of local dataset $d_k$ can be given by $B_k^{data}= d_k\mathcal{F}\mathcal{P}_d ~ \forall ~ k$ where $\mathcal{F}$ is the feature vector size and $\mathcal{P}_d$ is the bit precision. At every global update step, each learner $k$ and the orchestrator will exchange $B_k^{model}$ bits and the orchestrator may send an additional and $B_k^{data}$ bits for PL. The total time taken $t_k$ can then be written as shown in (\ref{Eq_16_C1isCompact}). The coefficients are given by $C_k^2$, $C_k^1$ and $C_k^0$, respectively, where, $C_k^2 = \frac{\mathcal{C}_m}{f_k}$, $C_k^1 = \frac{\mathcal{F}\mathcal{P}_d+2\mathcal{P}_m\mathcal{S}_d}{W\log_2\left(1+\frac{P_{ko} h_{ko}}{N_0}\right)}$, and $C_k^0 = \frac{2\mathcal{P}_m\mathcal{S}_m}{W\log_2\left(1+\frac{P_{ko} h_{ko}}{N_0}\right)}$. The first term in $C_k^1$ is specific to PL. For more details on the formulation of $t_k$, the readers are referred to \cite{Mohammad2019a}.
	\begin{equation}
	t_k = C_k^2  \tau d_k + C_k^1 d_k + C_k^0
	\label{Eq_16_C1isCompact}
	\end{equation}

	\section{Problem Formulation}
	\label{Sec04__Problem}
	It has been previously shown that maximizing the local iterations per global cycle can lead to a faster progression of the learning process \cite{Mohammad2019a, Moha2020b}. On the other hand, for asynchronous approaches, it has been shown that accuracy can be optimized by controlling the staleness \cite{AsyncFedOpt_Proof, StalenessAwarePaper}. Although our model is different, \cite{StalenessAwarePaper,AsyncFedOpt_Proof}, we can still demonstrate that a joint approach be employed to obtain the best accuracy.
	\begin{lemma}
		Jointly controlling the staleness $s_{k,l} ~ \forall ~ k \in \mathcal{K} ~ \& ~ l ~ \in \{\mathcal{K} \mid l\neq k\}$ while maximizing the minimum number of updates $\min(\tau_k) k \in \mathcal{K}$ will minimize the global loss of the proposed HA-Asyn MEL. 
		\label{lemma1} 
	\end{lemma}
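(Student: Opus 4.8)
The plan is to establish a convergence-style upper bound on the global loss of the HA-Asyn scheme and show that this bound is jointly monotone---decreasing in the number of local updates and increasing in the staleness. Since minimizing a tight upper bound on the loss is the standard surrogate for minimizing the loss itself in this literature, the joint objective in the statement would then follow directly.

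First, I would adopt the assumptions standard to such analyses (following \cite{Wang2019} and the staleness-aware works \cite{StalenessAwarePaper, AsyncFedOpt_Proof}): each local loss $F_k$ and the global loss $F$ are convex, $\rho$-Lipschitz and $\beta$-smooth, and the gradient divergence $\left\| \nabla F_k(\mathbf{w}) - \nabla F(\mathbf{w}) \right\| \leq \delta_k$ is bounded, with $\delta_k$ capturing the data heterogeneity at learner $k$. Second, I would introduce a virtual centralized reference sequence $\mathbf{v}[l]$ that performs pure gradient descent on $F$ from the most recent aggregation point, and bound the gap $\left\| \mathbf{w}_k[l] - \mathbf{v}[l] \right\|$ between each learner's locally updated model and this reference. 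In the synchronous case this is exactly the gap function of \cite{Wang2019}, which grows with the number of local steps taken since the last aggregation and with $\delta_k$. After the weighted aggregation in (\ref{Eq_Sec03_GlobalAgg}), the aggregated model inherits a deviation bounded by the $d_k/d$-weighted combination of these per-learner gaps.

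Third, and this is where asynchrony enters, I would extend the gap analysis to unequal update counts. When (\ref{Eq_Sec03_GlobalAgg}) aggregates models in which learner $k$ has advanced $\tau_k$ steps while learner $l$ has advanced only $\tau_l$, the two local trajectories are misaligned by an amount governed by $s_{k,l} = \tau_k - \tau_l$. Using $\beta$-smoothness and $\rho$-Lipschitzness I would bound the additional error produced by this misalignment as an increasing function of the staleness values, layered on top of the baseline divergence already present in the synchronous bound. Fourth, I would assemble these into an upper bound on the optimality gap $F(\mathbf{w}) - F(\mathbf{w}^{*})$ after a fixed number of global cycles. The bound takes the form of a term that is monotonically decreasing in the effective number of local updates---so a larger $\min_k \tau_k$, and hence more accumulated gradient progress, shrinks it---plus a penalty that is monotonically increasing in the staleness $s_{k,l}$. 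Minimizing this upper bound therefore requires simultaneously driving $\min_k \tau_k$ as large as possible while keeping every $s_{k,l}$ small, which is precisely the joint objective claimed, establishing the lemma.

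The hard part will be the third step: cleanly quantifying the staleness-induced error when aggregating models that have taken different numbers of gradient steps. In the synchronous derivation all local clocks coincide at aggregation, so the only error source is the within-cycle divergence and the per-cycle recursion telescopes neatly; in the asynchronous setting I must additionally track how the unequal counts $\tau_k$ propagate through the weighted aggregation (\ref{Eq_Sec03_GlobalAgg}) without that simplification. Obtaining a bound that \emph{separates} into a term favoring more updates and a term favoring less staleness---rather than a tangled coupling of the two---is the delicate point, and will likely require either controlling the maximum staleness $\max_{k,l} s_{k,l}$ or assuming that the gradients do not vary too rapidly across the stale interval.
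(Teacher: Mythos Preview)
Your plan is essentially the paper's own approach: the paper also assumes convexity, $\rho$-Lipschitz and $\beta$-smoothness of $F_k$, introduces an auxiliary reference sequence $\hat{\mathbf{w}}_{[g]}[l]$ that performs centralized gradient descent from the last aggregation, and invokes the per-learner gap function $f_k(t)=\frac{\delta_k}{\beta}\bigl[(\eta\beta+1)^t-1\bigr]$ from \cite{Wang2019} to bound $\left\lVert \mathbf{w}[l]-\hat{\mathbf{w}}_{[g]}[l]\right\rVert$, then argues that larger staleness inflates this bound (via the exponential growth of $f_k$) while more updates push $l$ forward and hence the learning progress.

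The one difference worth noting is scope: the paper stops at the model-divergence bound and argues the monotonicity \emph{qualitatively}---it never assembles a closed-form upper bound on $F(\mathbf{w})-F(\mathbf{w}^*)$ nor derives the clean two-term separation you are after. So the ``hard part'' you flag in step three is real, but the paper sidesteps it by remaining heuristic; if you succeed in obtaining an explicit separable bound you will have produced something strictly stronger than what the paper actually proves.
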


	\begin{proof}
		 Let's assume the orchestrator will train the MEL model for a total of $L$ epochs where a global aggregation can occur at any update step $l$ for $l = 1,~\ldots,~L$. Local updates occur at each step $l$. In the synchronous model of \cite{Wang2019,Mohammad2019a,Moha2020b}, between any two global updates, each learner $k$ performs $\tau$ updates whereas it performs $\tau_k$ updates in the proposed asynchronous version. Let us assume, to facilitate the analysis, that the global aggregations occur at integer multiples of the $\tau_m = \max(\tau_k)$; which represents the maximum possible updates that would be done by the highest performing learner. We can now define the interval $[g]$ defined over $[g\tau_m,~(g+1)\tau_m]$ for $g = 0,1,2,\ldots$. 
		
		Assuming a global aggregation were to occur at each iteration $l$, let us define an auxiliary set of global model parameters denoted by $\hat{\mathbf{w}}_{[g]}$ for any interval $[g]$ which would be calculated if a global update step took place. (Note that at the beginning of any interval $[g]$, a global aggregation does occur.) Then, the update rule for this auxiliary set can be given by:
		\begin{equation}
		\hat{\mathbf{w}}_{[g]}[l+1] = \hat{\mathbf{w}}_{[g]}[l]-\eta \nabla F(\hat{\mathbf{w}}_{[g]}[l]) 
		\end{equation} 
		
		We assume that the local loss function at learner $k$ given by $F_k(\mathbf{w})$ is:
		\begin{enumerate}
			\item convex
			\item $\rho$-Lipschitz $\lVert F_k(\mathbf{w})-F_k(\bar{\mathbf{w}}) \rvert \leq \rho \lvert \mathbf{w}-\bar{\mathbf{w}} \rvert$
			\item $\beta$-smooth $ \lVert \nabla F_k(\mathbf{w})-\nabla F_k(\bar{\mathbf{w}}) \rvert \leq \beta \lvert \mathbf{w}-\bar{\mathbf{w}} \rvert$
		\end{enumerate}
		These assumptions will hold for ML models with convex loss function such as linear regression and SVM. By simulations, we will show that the proposed solutions work for non-convex models such as DNN with ReLU activation. It has been shown that for such a model, the difference between the global optimal model and the auxiliary model for any iteration $l$ within an interval $g$, for $l = 1,\ldots,L$ and $g = 0,1,2,\ldots$, can bounded by:
		\begin{multline}
		\left\lVert \mathbf{w}[l+1]-\hat{\mathbf{w}}_g[l+1] \right\rVert \leq \left\lVert \mathbf{w}[l]-\hat{\mathbf{w}}_g[l] \right\rVert + \\ \dfrac{\eta\beta}{d}\sum_{k=1}^{K} f_k\left[l-g\tau_m\right]
		\label{staleness_div}
		\end{multline}
		
		The function $f_k(t) = \frac{\delta_k}{\beta}[(\eta\beta+1)^t-1]$ which relates the local model $\mathbf{w_k} ~ \forall ~ k \in \mathcal{K}$ to the auxiliary model set $\hat{\mathbf{w}}_g[l]$ as follows:
		\begin{equation}
		\left\lVert \mathbf{w}_k[l]-\hat{\mathbf{w}}_g[l] \right\rVert \leq f_k(l-g\tau_m)
		\label{staleness_div_local}
		\end{equation}
		
		Assume that each learner performs $\tau_k$ updates and for a particular interval, $l \in[g\tau_k,~(g+1)\tau_k] ~\forall ~ k\in \mathcal{K}$ where $l$ is the progression of the index of the best performing learner. Then, the upper bound on the model divergence can be given by the following expression:
		\begin{multline}
		\left\lVert \mathbf{w}[l+1]-\hat{\mathbf{w}}[l+1] \right\rVert \leq \left\lVert \mathbf{w}[l]-\hat{\mathbf{w}}[l] \right\rVert + \\ \dfrac{\eta\beta}{d}\sum_{k=1}^{K} f_k(l-g\tau_k)
		\label{staleness_div2}
		\end{multline}
		The learning rate $\eta$ can be selected such that $0\leq\eta\beta\leq1$ which is necessary to satisfy the assumptions in \cite{Wang2019}. In that case $1\leq\eta\beta+1\leq2$ and because $t_k = l-[g-1]\tau_k \geq0$, the function $f_k(t_k)$ grows exponentially greater as $t$ increases because the dominating term is $(\eta\beta+1)^{t_k}$. So, as the staleness $s_{k,l} = \tau_k - \tau_l \quad  k, l ~ \in \mathcal{K} \mid l\neq k$ increases, $f_k(t_k)$ will be higher for more learners which will result in a higher bound on the divergence. Therefore, as the auxiliary model diverges further from the globally optimal model, the loss will increase and hence, it can be expected that the accuracy will decrease.
		
		The learning will progress faster as $l$ is higher which can be done maximizing the $\tau_m$. Alternatively, if we want to keep $t_k ~ \forall ~ k$ low, we can maximize the $\min(\tau_k)$ while controlling staleness $s_{k,l}$. 
	\end{proof} 
	
	Therefore, the objective is to allocate batches $d_k$ such that we maximize $min(\tau_k)$ for $k\in\mathcal{K}$ while minimizing $s_{k,l} ~ \forall ~ k$. However, this problem will be non-tractable and difficult to solve. A more tractable way to achieve these objectives is to maximize the average of the local updates $\tau_k$ while controlling the staleness $s_{k,l} ~ \forall ~ k$. In this way, we can increase the number of local updates by the worst performing learner while controlling the model staleness. Based on this, the optimization variables are $\tau_k$ and $d_k$, and we can study the impact of different values of staleness by an additional constraint $s_{k,l} \leq c$. 
	
	In addition to this, the optimization needs to be done such that the global cycle is completed before time $T$ and does not violate the energy consumption limit $E_k^0$ (J) per global cycle of any learner $k$. It is observable that the relationship between the optimization variables in the global cycle time and local energy consumption constraints will be quadratic in $\tau_k$ and $d_k$. Moreover, due to $\tau_k$ and $d_k$ $\forall~k$ being non-negative integers, the resulting problem is a quadratically constrained integer linear program (QCILP) as shown: 
	
	\begin{subequations}
		\begin{align}
		&\qquad\operatornamewithlimits{max}_{\tau_k,{d}_k~\forall~k}  \quad \dfrac{1}{K}\sum_{k=1}^{K} \tau_k \tag{\ref{Eq_13_OurProb}}\\
		& \quad \nonumber\\
		\text{s.t. }\qquad & C_k^2 d_k \tau_k + C_k^1 d_k + C_k^0 \leq T, \quad \forall ~ k \in \mathcal{K} \label{orignial-time-const}\\
		& G_k^2 d_k \tau_k + G_k^1 d_k + G_k^0 \leq E_k^0, \quad \forall ~ k \in \mathcal{K} \label{original-energy-const}\\ 
		& \left\lvert \tau_k-\tau_l  \right\rvert \leq c ~ \forall k,l ~ \in \mathcal{K} \mid l \neq k \label{original-staleness-constraint}\\
		& \sum_{k = 1}^{K}d_k = d \label{orignial-batch-const}\\ 
		& \tau_k \in \mathcal{Z}_+, \quad \forall ~ k \in \mathcal{K} \label{orignial-tauk-const}\\
		& d_k \in \mathcal{Z}_+, \quad \forall ~ k \in \mathcal{K} \label{orignial-dk-const}\\
		& d_k > d_l \quad \forall ~ k \in \mathcal{K} \label{orignial-dlb-const}
		\end{align}
		\label{Eq_13_OurProb}
	\end{subequations}
	 Constraint (\ref{orignial-time-const}) and (\ref{original-energy-const}) guarantee that all learners $k$ satisfy the global cycle time constraint $T$ and the their energy consumption limit constraint $E_k^0 ~ \forall ~ k \in \mathcal{K}$, respectively.  
	 Constraint (\ref{original-staleness-constraint}) ensures that the staleness does not exceed a desired amount $c$. We will test for multiple values of the staleness and report the results later in section \ref{Sec06__SimulationResults}. The assurance that the orchestrator will learn on the complete dataset $\mathcal{D}$ is given by (\ref{orignial-batch-const}). Lastly, constraints (\ref{orignial-tauk-const}) and (\ref{orignial-dk-const}) ensure that both optimization variables are non-negative integers whereas constraint (\ref{orignial-dlb-const}) is a lower bound $d_k$ to ensure all learners are allocated a reasonable batch size. Note that the a lower bound of zero represents the case where $d_k$ can take any positive value. Solutions of (\ref{Eq_13_OurProb}) where $\tau_k$ or $d_k$ is zero for any $k$ represents a setting where learner $k$ cannot participate in the learning process and MEL may be suboptimal. 
	 
	 Thus, the program in \ref{original-energy-const} is a quadratically-constrained integer linear program (QCILP) which is well-known to be NP-hard \cite{QIP_NP_ArXiV}. This problem can be solved numerically with interior point methods using commercially available solvers. However, in the next section, we propose an analytical-numerical solution based on a combined relaxation and suggest-and-improve (SAI) approach. 
		
	\section{Proposed Solution}
	\label{Section5__Solution}
	Instead of applying the SAI technique directly on the asynchronous problem in (\ref{Eq_13_OurProb}), we first propose to solve the problem by getting candidate solutions for $d_k$ and $\tau_k$ from the synchronous problem in \cite{Moha2020b} by setting $\tau_k = \tau ~ \forall ~ k \in \mathcal{K}$. In the next step, we obtain the solution to (\ref{Eq_13_OurProb}) by applying the improve step using the candidate solutions as the initial values. The reason for doing this is because a system of $K$ learners would produce at least $\left( ^K_2 \right)$ constraints. For example, an MEL system of 100 learners will result in 4950 mutual staleness bounds. It was found that applying SAI directly to (\ref{Eq_13_OurProb}) does not work but applying the suggest step to the synchronous problem in \cite{Moha2020b} and the improve step to our problem provides solutions that converge. In the last step, the real values of the obtained $\tau_k$'s and $d_k$'s are floored to get the integer values.
		
	\subsection{Problem Relaxation}
	The problem can be simplified by replacing $\tau_k$'s with the optimization variable $\tau$ and relaxing the integer constraints in (\ref{orignial-tauk-const}) and (\ref{orignial-dk-const}) as follows:
	\begin{subequations}
		\begin{align}
		&\qquad\operatornamewithlimits{max}_{\tau,{d}_k~\forall~k}  \quad \tau \tag{\ref{Eq_13_RelaxedProblem}}\\
		& \quad \nonumber \\
		\text{s.t. }\qquad & C_k^2 d_k \tau + C_k^1 d_k + C_k^0 \leq T, \quad \forall ~ k \in \mathcal{K}  \label{relaxed-time-const}\\
		& G_k^2 d_k \tau + G_k^1 d_k + G_k^0 \leq E_k^0, \quad \forall ~ k \in \mathcal{K} \label{relaxed-energy-const}\\ 
		& \sum_{k = 1}^{K}d_k = d \label{relaxed-batch-const}\\ 
		& \tau \geq 0 \label{relaxed-tauk-const}\\
		& d_k \geq d_l, \quad \forall ~ k \in \mathcal{K} \label{relaxed-dk-const}
		\end{align}
		\label{Eq_13_RelaxedProblem}
	\end{subequations}
	Note that $\dfrac{1}{K}\sum_{k=1}^{K} \tau_k = \tau$ when $\tau_k = \tau ~ \forall ~ k \in \mathcal{K}$. The non-negative integer (\ref{orignial-dk-const}) constraint on $d_k$'s has been relaxed this and can be covered by (\ref{relaxed-dk-const}). Constraint (\ref{orignial-dk-const}) has been eliminated after relaxation because $d_l \geq 0$ in (\ref{relaxed-dk-const}). Analytically, the matrices associated with the quadratic time and energy constraints in (\ref{relaxed-time-const}) and (\ref{relaxed-energy-const}), respectively, will not be positive semi-definite due to them being sparse with two non-negative values each, which results in them having one positive and one negative eigenvalue. 
	
	\subsection{Upper Bounds using Lagrangian Relaxation}
	We can re-write the equality constraint in \ref{relaxed-batch-const} as two inequality constraints: $\sum_{k = 1}^{K}d_k - d \leq 0$ and $-\sum_{k = 1}^{K}d_k + d \leq 0$. Hence, the relaxed problem's Lagrangian function can be written as:	
	\begin{multline}\label{lagrangian}
	L\left(\mathbf{x}, \mathbf{\lambda}, \mathbf{\gamma}, \alpha, \bar{\alpha}, \omega, \mathbf{\nu}\right) = -\tau + \\ \sum_{k = 1}^{K} \lambda_k\left(C_k^2  \tau d_k + C_k^1 d_k + C_k^0 - T\right) + \\ \sum_{k = 1}^{K} \gamma_k\left(G_k^2  \tau d_k + G_k^1 d_k + G_k^0 - E_k^0\right) + \\ \alpha\left(\sum_{k = 1}^{K}d_k - d\right) - \bar{\alpha}\left(\sum_{k = 1}^{K}d_k - d\right) - \\ \omega\tau - \sum_{k = 1}^{K} \nu_kd_k    
	\end{multline}
	The Lagrange multipliers associated with the global cycle time and local energy constraints are given by $\lambda_k$ and $\gamma_k$, respectively, $\forall ~ k\in\{1,\dots,K\}$. The Lagrange multipliers related to the two total task size constraint inequalities are given by $\alpha$/$\bar{\alpha}$, and $\omega$/$\nu_k$ $k\in\{1,\dots,K\}$ are the Lagrangian multipliers associated with the non-negative constraints of both sets of optimization variables $\tau$ and $d_k$, respectively. 
	
	Let the set of optimization variables be denoted by $\mathbf{x} = \left[\tau ~ d_1 ~ d_2 ~ \ldots ~ d_k ~ \ldots ~ d_K\right]^T$ and the set of Lagrange multipliers by $\mathbf{\Gamma} = \left[\mathbf{\lambda}, \mathbf{\gamma}, \alpha, \bar{\alpha}, \omega, \mathbf{\nu}\right]^T$, where $\lambda = [\lambda_1 \ldots \lambda_k \ldots \lambda_K]^T$, $\gamma = [\gamma_1 \ldots \gamma_k \ldots \gamma_K]^T$, and $\nu = [\nu_1 \ldots \nu_k \ldots \nu_K]^T$. 
	\begin{theorem}
		The set of optimal Lagrange multipliers can be obtained by solving the dual problem in the following semi-definite program (SDP):
		\begin{subequations}
			\label{second:main}
			\begin{align}
			&\qquad\operatornamewithlimits{max}_{\mathbf{\Gamma}}  \quad \zeta \tag{\ref{second:main}}\\
			\text{s.t. }\qquad 
			& \nonumber \left[\begin{array}[]{cc}
			\mathbf{F}^2\left(\mathbf{\Gamma}\right) & \frac{1}{2}\mathbf{f}^1\left(\mathbf{\Gamma}\right) \\
			\frac{1}{2}\mathbf{f}^1\left(\mathbf{\Gamma}\right)  & f_0\left(\mathbf{\Gamma}\right) - \zeta 
			\end{array}\right] \succcurlyeq 0 \\
			& \nonumber \mathbf{\Gamma}  \succcurlyeq 0 
			\end{align}
			\label{Eq_10_theoremSDP}
		\end{subequations}  
	\end{theorem}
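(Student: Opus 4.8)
The plan is to recognize the Lagrangian in (\ref{lagrangian}) as a quadratic form in the stacked decision vector $\mathbf{x} = [\tau~d_1~\cdots~d_K]^T$ and then to convert the inner minimization of the Lagrange dual into a single linear matrix inequality via homogenization. First I would sort the terms of $L(\mathbf{x},\mathbf{\Gamma})$ by their degree in $\mathbf{x}$ and write $L(\mathbf{x},\mathbf{\Gamma}) = \mathbf{x}^T \mathbf{F}^2(\mathbf{\Gamma})\mathbf{x} + \mathbf{f}^1(\mathbf{\Gamma})^T\mathbf{x} + f_0(\mathbf{\Gamma})$. The only second-order terms are the bilinear products $\tau d_k$ coming from the quadratic time and energy constraints (\ref{relaxed-time-const})--(\ref{relaxed-energy-const}), so $\mathbf{F}^2(\mathbf{\Gamma})$ is the sparse symmetric $(K+1)\times(K+1)$ matrix whose $(\tau,d_k)$ and $(d_k,\tau)$ entries each equal $\tfrac{1}{2}\left(\lambda_k C_k^2 + \gamma_k G_k^2\right)$ and whose remaining entries vanish; the linear part $\mathbf{f}^1(\mathbf{\Gamma})$ collects the coefficient $-1-\omega$ of $\tau$ together with the coefficients $\lambda_k C_k^1 + \gamma_k G_k^1 + \alpha - \bar{\alpha} - \nu_k$ of each $d_k$; and the scalar $f_0(\mathbf{\Gamma}) = \sum_{k}\lambda_k(C_k^0 - T) + \sum_{k}\gamma_k(G_k^0 - E_k^0) + (\bar{\alpha}-\alpha)d$ gathers the constants. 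This step is purely mechanical provided one keeps the factor of one half on the off-diagonal entries of $\mathbf{F}^2$.

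Second, the Lagrange dual function is $g(\mathbf{\Gamma}) = \inf_{\mathbf{x}} L(\mathbf{x},\mathbf{\Gamma})$, and the dual problem is $\max_{\mathbf{\Gamma}\succcurlyeq 0} g(\mathbf{\Gamma})$, where $\mathbf{\Gamma}\succcurlyeq 0$ enforces the sign requirements on the multipliers attached to the inequality constraints. Introducing an epigraph variable $\zeta$, I would recast this as maximizing $\zeta$ subject to $\zeta \le g(\mathbf{\Gamma})$. The pivotal observation is that $\zeta \le \inf_{\mathbf{x}} L(\mathbf{x},\mathbf{\Gamma})$ holds if and only if the quadratic $L(\mathbf{x},\mathbf{\Gamma}) - \zeta \ge 0$ for every $\mathbf{x}\in\mathbb{R}^{K+1}$, which turns the dual feasibility condition into a global nonnegativity statement about a single quadratic.

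Third, I would invoke the standard homogenization lemma: a quadratic $\mathbf{x}^T A \mathbf{x} + b^T\mathbf{x} + c$ is nonnegative for all $\mathbf{x}$ if and only if the augmented symmetric matrix $\left[\begin{smallmatrix} A & b/2 \\ b^T/2 & c \end{smallmatrix}\right] \succcurlyeq 0$, which follows by lifting $\mathbf{x}$ to the homogeneous coordinate $[\mathbf{x}^T~1]^T$ and using a scaling/limiting argument to dispose of the normalization of the last coordinate. Substituting $A = \mathbf{F}^2(\mathbf{\Gamma})$, $b = \mathbf{f}^1(\mathbf{\Gamma})$, and $c = f_0(\mathbf{\Gamma}) - \zeta$ reproduces exactly the LMI displayed in (\ref{second:main}), and adjoining $\mathbf{\Gamma}\succcurlyeq 0$ yields the claimed semi-definite program.

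The main obstacle is the homogenization equivalence in the third step. Because $\mathbf{F}^2(\mathbf{\Gamma})$ is indefinite for generic multipliers (indeed, the underlying constraint matrices are sparse with one positive and one negative eigenvalue each, as noted after (\ref{relaxed-dk-const})), the inner infimum $g(\mathbf{\Gamma})$ equals $-\infty$ whenever $\mathbf{F}^2(\mathbf{\Gamma})\not\succcurlyeq 0$ or $\mathbf{f}^1(\mathbf{\Gamma})$ leaves its range, and one must argue carefully that the LMI is precisely the condition that rules these degenerate cases out while simultaneously bounding $\zeta$ by $g(\mathbf{\Gamma})$. Establishing that the supremum over $\zeta$ is attained exactly at the value $g(\mathbf{\Gamma})$ on the feasible set, and hence that maximizing over $\mathbf{\Gamma}\succcurlyeq 0$ recovers the optimal multipliers together with a valid (weak-duality) lower bound on (\ref{Eq_13_RelaxedProblem}), is the crux of the argument.
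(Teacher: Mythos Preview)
Your proposal is correct and follows essentially the same route as the paper: group the Lagrangian into quadratic, linear, and constant pieces in $\mathbf{x}$, identify these as $\mathbf{F}^2(\mathbf{\Gamma})$, $\mathbf{f}^1(\mathbf{\Gamma})$, $f_0(\mathbf{\Gamma})$, and then express the dual as an SDP via the nonnegativity-of-a-quadratic/homogenization characterization. The paper's own proof is actually terser---it first recasts (\ref{Eq_13_RelaxedProblem}) as an explicit QCQP, lists the individual constraint matrices $\mathbf{P}_k,\mathbf{Q}_k,\ldots$ and vectors $\mathbf{p}_k,\mathbf{q}_k,\ldots$, aggregates them into $\mathbf{F}^2,\mathbf{f}^1,f_0$, and then simply cites a standard QCQP reference for the SDP dual---so your homogenization step in fact supplies the argument the paper only points to.
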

	\ignore{The functions of the set of Lagrange multipliers $\mathbf{\Gamma}$ given by $\mathbf{F}^2(\mathbf{\Gamma})$, $\mathbf{f}^1(\mathbf{\Gamma})$ and $f_0(\mathbf{\Gamma})$ are associated with the quadratic, linear, and constant terms of the objective and constraints, respectively. For details, please refer to Appendix C.}
	\begin{proof}
		The optimization variables be denoted by $\mathbf{x}$ where $\mathbf{x} = \left[\tau ~ d_1 ~ d_2 ~ \ldots ~ d_k ~ \ldots ~ d_K\right]^T$. Then, the relaxed program in (\ref{Eq_13_RelaxedProblem}) for the synchronous case can be re-written in the form of a QCQP as shown below:
		\begin{subequations}
			\begin{align}
			&\qquad\operatornamewithlimits{min}_{\mathbf{x}}  \quad \mathbf{x}^T \mathbf{F} \mathbf{x} +\mathbf{f}^T \mathbf{x} + f_0\\
			\text{s.t. }\qquad & \mathbf{x}^T \mathbf{P}_k \mathbf{x} +\mathbf{p}_k^T \mathbf{x} + p_k^0 \leq 0, \quad \forall k \in \mathcal{K}    \label{matrix-time-const}\\
			& \mathbf{x}^T \mathbf{Q}_k \mathbf{x} +\mathbf{q}_k^T \mathbf{x} + q_k^0 \leq 0, \quad \forall k \in \mathcal{K}  \label{matrix-energy-const}\\ 
			& \mathbf{x}^T \mathbf{A} \mathbf{x} +\mathbf{a}^T \mathbf{x} + a_0 \leq 0 \label{matrix-batch-const}\\
			& \mathbf{x}^T \bar{\mathbf{A}} \mathbf{x} +\bar{\mathbf{a}}^T \mathbf{x} + \bar{a}_0 \leq 0 \label{matrix-batch-const-neg}\\  
			& \mathbf{x}^T \mathbf{U} \mathbf{x} +\mathbf{U}^T \mathbf{x} + u_0 \leq 0 \label{matrix-tau-const}\\
			& \mathbf{x}^T \mathbf{V}_k \mathbf{x} +\mathbf{v}_k^T \mathbf{x} + v_k^0 \leq 0, \quad \forall k \in \mathcal{K} \label{matrix-d-const}
			\end{align}
			\label{Eq_13_MatrixProblem}
		\end{subequations}
		
		Constraints (\ref{matrix-time-const}) and (\ref{matrix-energy-const}) represent the time and energy constraints, respectively, and constraints (\ref{matrix-batch-const}) and (\ref{matrix-batch-const-neg}) represent the total batch size constraint as two inequalities. The non-negative constraints on $\tau$ and $d_k$ are given in (\ref{matrix-tau-const}) and (\ref{matrix-d-const}), respectively. The constants associated with the time and energy constraints can be defined as $p_k^0 = C_k^0-T$ and $q_k^0 = G_k^0-E_k^0$, respectively, $\forall ~ k$. The variables $a_0 = -d$, $\bar{a}_o = d$ and $v_k^0 = d_l, \forall ~ k$ whereas $u_0 = 0$ and $f_0  =0$. 
		
		The coefficients associated with the linear terms in the objective and constraints ($\mathbf{f}$ and $~\mathbf{p}_k\text{,} ~\mathbf{q}_k\text{,} ~\mathbf{a}\text{,} ~\bar{\mathbf{a}}\text{,} ~\mathbf{u}, \text{ and } \mathbf{v}_k$, respectively) are given in (\ref{Eq_14_vector_defs})as column vectors. 
		\begin{subequations}
			\label{third:main}
			\begin{align}
			&\quad \nonumber \mathbf{f} = \left[-1 ~ 0 ~ 0 ~ \ldots ~ C_k^1 ~ \ldots ~ 0\right]^T \tag{\ref{third:main}}\\
			&\quad \nonumber \mathbf{p}_k = \left[0 ~ 0 ~ 0 ~ \ldots ~ C_k^1 ~ \ldots ~ 0\right]^T, \forall ~ k\\
			&\quad \nonumber \mathbf{q}_k = \left[0 ~ 0 ~ 0 ~ \ldots ~ G_k^1 ~ \ldots ~ 0\right]^T, \forall ~ k\\
			&\quad \nonumber \mathbf{a} = \left[0 ~ 1 ~ 1 ~ \ldots ~ 1 ~ \ldots ~ 1\right]^T\\
			&\quad \nonumber \bar{\mathbf{a}} = \left[0 ~ -1 ~ -1 ~ \ldots ~ -1 ~ \ldots ~ -1\right]^T\\
			&\quad \nonumber \mathbf{u} = \left[-1 ~ 0 ~ 0 ~ \ldots ~ 0 ~ \ldots ~ 0\right]^T\\
			&\quad \nonumber \mathbf{v}_k = \left[0 ~ 0 ~ 0 ~ \ldots ~ -1 ~ \ldots ~ 0\right]^T, \forall ~ k
			\end{align}
			\label{Eq_14_vector_defs}
		\end{subequations} 	
		The quadratic matrices associated with the time and energy constraints, $\mathbf{P}_k$ and $\mathbf{Q}_k$, respectively, are given in (\ref{eq_P_matrix}) and (\ref{eq_Q_matrix}).
		\begin{equation}
		\mathbf{P}_k(i,j) = \begin{cases} 0.5C_k^2, \mbox{ if}  & \begin{matrix}
		i = 1 ~ \&  ~ j = k+1 \\ i = k+1 ~ \& ~ j = 1
		\end{matrix}\\ 0, & \mbox{otherwise} \end{cases}
		\label{eq_P_matrix}
		\end{equation}	
		\begin{equation}
		\mathbf{Q}_k(i,j) = \begin{cases} 0.5G_k^2, \mbox{ if}  & \begin{matrix}
		i = 1 ~ \&  ~ j = k+1 \\ i = k+1 ~ \& ~ j = 1
		\end{matrix}\\ 0, & \mbox{otherwise} \end{cases}
		\label{eq_Q_matrix}
		\end{equation}	
		The remaining quadratic matrices $\mathbf{F}$, $\mathbf{A}$, $\mathbf{\bar{A}}$, $\mathbf{U}$ and $\mathbf{V}_k$ are all $\mathbf{0}_{(K+1) \times (K+1)}$. 
		
		The functions $\mathbf{F}^2(\mathbf{\Gamma})$, $\mathbf{f}^1(\mathbf{\Gamma})$ and $f_0(\mathbf{\Gamma})$ can now be defined as \cite{QCQP_Boyd}:	
		\begin{equation}
		\mathbf{F}^2(\mathbf{\Gamma}) = \sum_{k = 1}^{K} \lambda_k \mathbf{P}_k + \gamma_k \mathbf{Q}_k
		\label{eq_func_quad}
		\end{equation}
		\begin{equation}
		\mathbf{f}^1(\mathbf{\Gamma}) = \sum_{k = 1}^{K}\left( \lambda_k \mathbf{p}_k + \gamma_k \mathbf{q}_k + \nu_k \mathbf{v}_k\right) + \alpha \mathbf{a} + \bar{\alpha} \bar{\mathbf{a}} + \omega \mathbf{u}
		\label{eq_func_linr}
		\end{equation}
		\begin{equation}
		f_0(\mathbf{\Gamma}) = \sum_{k = 1}^{K}\left( \lambda_k p_k^0 + \gamma_k q_k^0 + \nu_k v_k^0 \right) + \alpha a_0 + \bar{\alpha} \bar{a}_0 
		\label{eq_func_cons}
		\end{equation}
	\end{proof}
	
	A candidate solution to the SDP in (\ref{Eq_10_theoremSDP}) is given by:
	\begin{equation}
		\hat{\mathbf{x}} = -\dfrac{1}{4}\mathbf{F}^2\left(\mathbf{\Gamma}\right)^{-1}\mathbf{f}^1\left(\mathbf{\Gamma}\right)
		\label{Eq_11_Candidate}
	\end{equation}
	The candidate solution will be the optimal solution in case of a convex QCQP. In the case of a non-convex QCQP, there is an expected duality gap and an improve step is needed. Hence, for the synchronous approach, we apply this step to the problem in (\ref{Eq_13_RelaxedProblem}). For the proposed asynchronous approach, the optimal solution can be obtained by applying the local optimizer coordinate descent (CD) to the problem in (\ref{Eq_13_OurProb}) with the relaxed constraints in (\ref{relaxed-tauk-const}) and (\ref{relaxed-dk-const}) with $\tau_k$ replacing $\tau$ in (\ref{relaxed-tauk-const}) and the additional constraint in (\ref{original-staleness-constraint}). 
	The complete steps followed by the orchestrator over multiple global cycles are summarized in Algorithm \ref{alg1}.
	\begin{algorithm}[t]
		\caption{Process at the Orchestrator}
		\label{alg1}
		\begin{algorithmic}[1]
			\renewcommand{\algorithmicrequire}{\textbf{Input:}}
			\renewcommand{\algorithmicensure}{\textbf{Output:}}
			\REQUIRE $T$, $d$, $d_l$, $K$
			\ENSURE  $\mathbf{w}$
			\\Initialize $\mathbf{w}$ and set the flag \textit{STOP} $\leftarrow$ \textbf{FALSE}
			\WHILE {\NOT \textit{STOP}}
			\STATE \textit{In Parallel}: Send $\mathbf{w}$ to each learner $k \in \mathcal{K}$ 
			\STATE \textit{In Parallel}: Receive $P_{kO}$, $h_{kO}$, $f_k$, and $e_k^0$ from $k \in \mathcal{K}$
			\STATE Solve (\ref{Eq_11_Candidate}) to obtain $\hat{\tau}$, $\hat{d_k}$
			\STATE Transform (\ref{Eq_13_OurProb}) by setting $\tau_k > 0 ~ \forall ~ k \in \mathcal{K} $ in (\ref{orignial-tauk-const}) and removing (\ref{orignial-dk-const})
			\STATE Get $\tau_k$ and $d_k$ by applying CD to (\ref{Eq_13_OurProb}) using $\hat{\tau}$, $\hat{d_k}$
			\STATE \textit{In Parallel}: Send $\lfloor \tau_k \rfloor$, $\lfloor d_k \rfloor$ to each learner $k \in \mathcal{K}$\footnotemark
			\STATE \textit{In Parallel}: After $\tau_k$ local updates, receive $w_k ~ \forall ~ k \in \mathcal{K}$
			\STATE Obtain $\mathbf{w}$ using (\ref{Eq_Sec03_GlobalAgg})
			\IF {\textit{STOPPING CRITERIA REACHED}}
			\STATE Set \textit{STOP} $\leftarrow$ \textbf{TRUE}
			\ENDIF
			\ENDWHILE
			\RETURN $\mathbf{w}$ 
		\end{algorithmic} 
	\end{algorithm} 
	\begin{table}[t!]
		\centering
		\small
		\caption{List of simulation parameters}
		\begin{tabular}{|l|l|}
			\hline
			Parameter                                 		& Value                         						\\ \hline
			Cell Attenuation Model							& $128+37.1\log(R)$ dB \cite{Moha1812:Multi}         	\\ \hline
			Channel Bandwidth $(W)$							& 5 MHz                               					\\ \hline
			Device proximity (R)							& 500m													\\ \hline
			Transmission Power $(P_k)$                   	& 23 dBm                              					\\ \hline
			Noise Power Density $(N_0)$                  	& -174 dBm/Hz                         					\\ \hline
			Computation Capability $(f_k)$             		& $\sim \{6.0,~2.4,~1.4,~0.7\}$ GHz             		\\ \hline
			MNIST Dataset size (d)                	 		& 60,000 images                        					\\ \hline
			MNIST Dataset Features $(\mathcal{F})$  		& 784 ($~28 \times 28~$) pixels     					\\ \hline
		\end{tabular}
		\label{Table_1_OfParameters}
	\end{table}
	\begin{figure}[t!]
		\centering
		\includegraphics[scale=0.6]{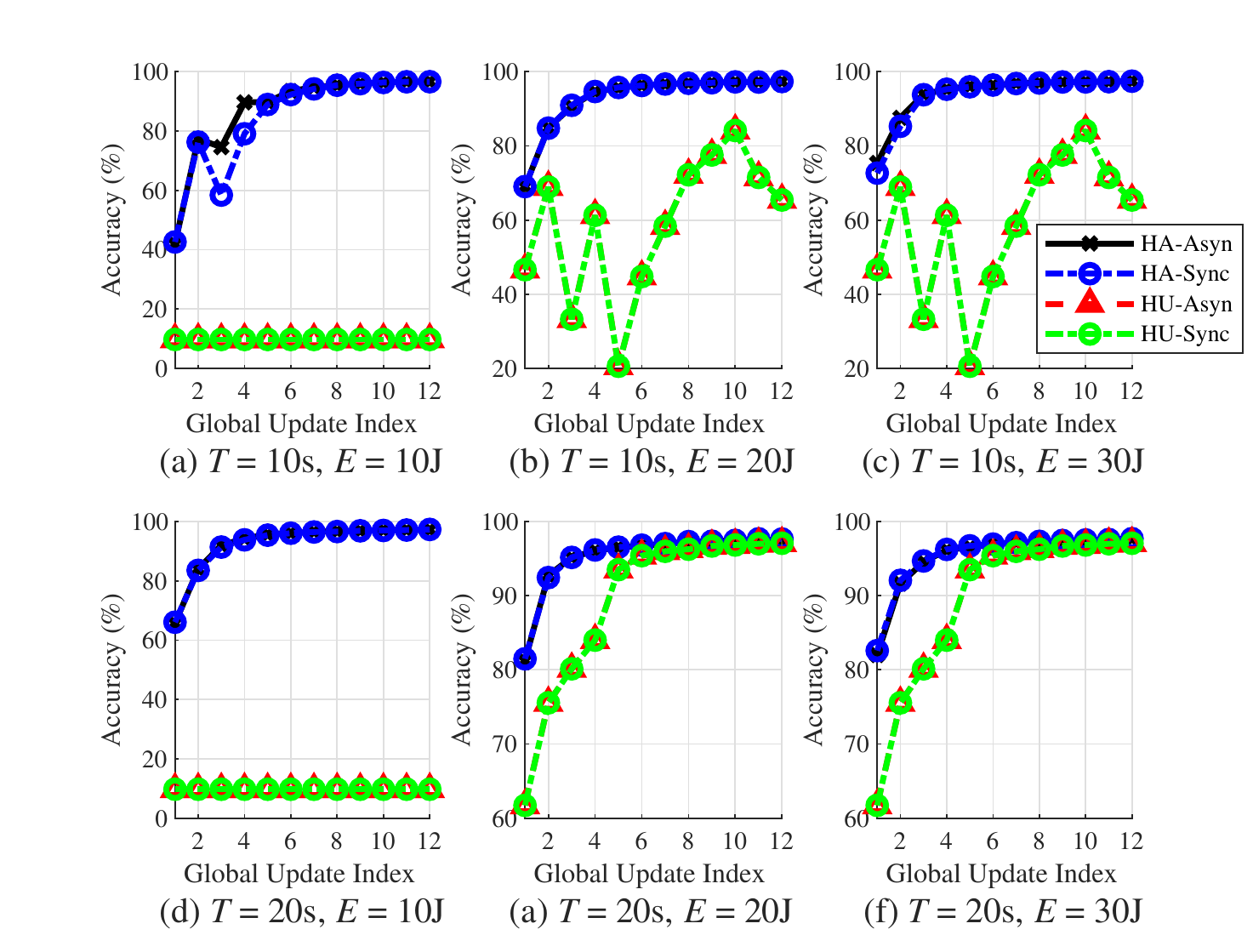}
		\caption{Final validation accuracy achieved after a total of 12 global epochs with various training times for an average device energy consumption of 10J.}
		\label{figure06label}
	\end{figure}

	\section{Simulation Results}
	\label{Sec06__SimulationResults}

	\subsection{Simulation Environment}
	\begin{figure*}[t!]
		\centering
		\includegraphics[width=\linewidth,height=\textheight,keepaspectratio]{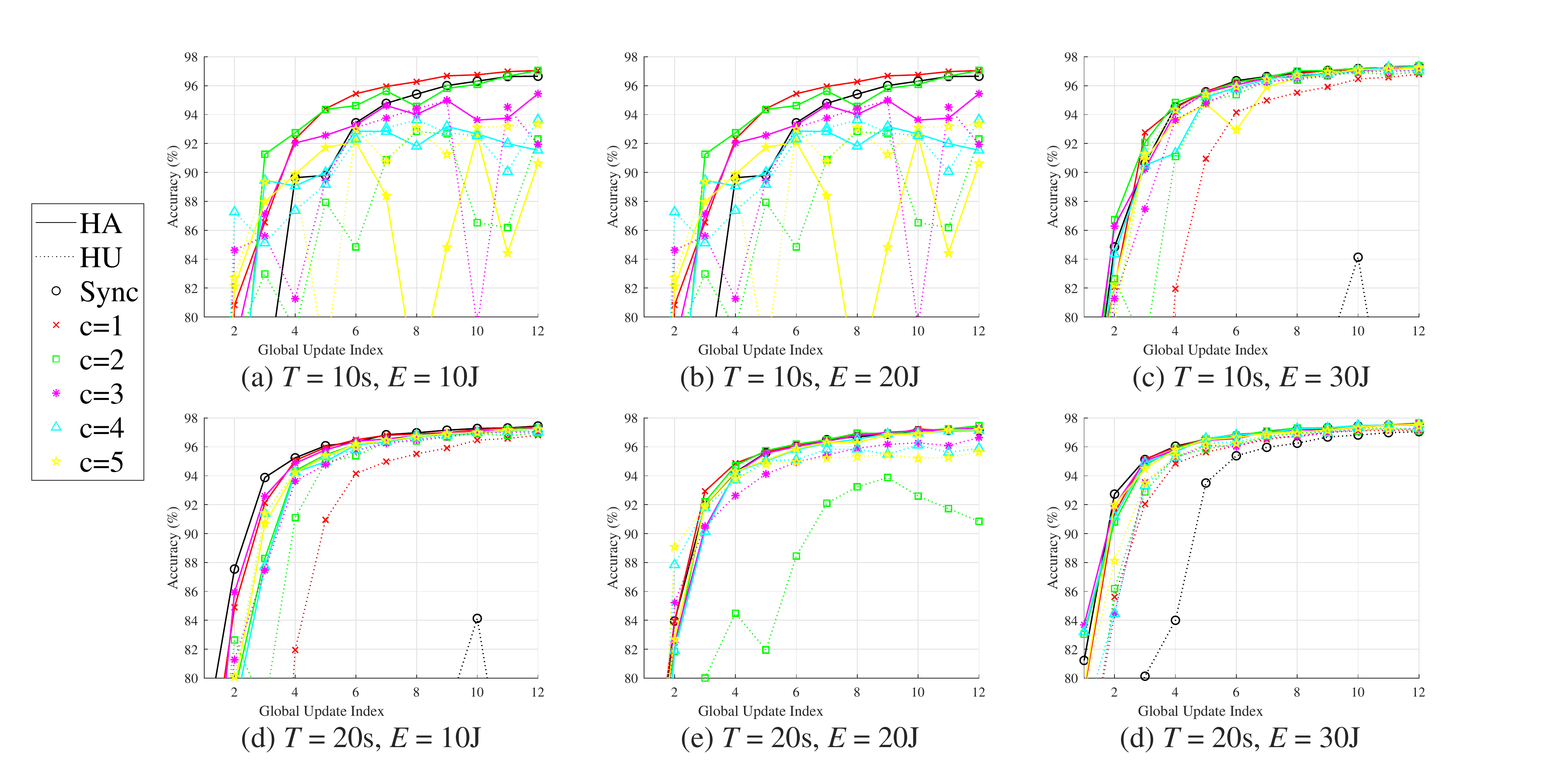}
		\caption{Accuracy progression for the cases when $T$ = 10s and 20s and for average energy values of $E=\{10,20,30\}J$.}
		\label{figure07label}
	\end{figure*}
	It is assumed that the learners comprise a combination of the following: laptops with multi-core processors, smart phones simple processors, advanced micro-controllers such as the Raspberry Pi, and very simple micro-controllers such as the Arduino. The learners are co-located within 500 meters in a cellular type environment and may be mobile. The channel parameters and other specs are listed in table \ref{Table_1_OfParameters}. To test our proposed MEL paradigm, a fully-connected deep neural network consisting of 300, 124, and 60 neurons is used to train on the MNIST dataset \cite{MNIST_IEEE}. For detailed descriptions on how to obtain the model size and computational complexity, the readers are referred to \cite{Mohammad2019a}.
	\footnotetext{In PL, the orchestrator sends $d_k$ samples after randomly shuffling its dataset whereas it sends $d_k$ in FL and the learner chooses $\min(d_k,d_k^{max})$ data points where $d_k^{max}$ is learner $k$'s dataset size.}
	
	The two major additional contributions of this paper are the study of different levels of caps on the energy consumed per global cycle per learner $k$ and the impact of different values of staleness capped by $c$. 
	We plot the validation accuracy related metrics for values of staleness of up-to $c=5$, because it was found that having a higher $c>5$ does offer any improvements.   
	
	As for the constraint on the local energy consumption $E_k^0 ~ \forall ~ k \in K$, it is possible that devices will have wildly varying consumption limits. However, to quantify the impact of these limits, we define an average energy consumption per global cycle across all learners $E$ (J) and $E_k^0$ (J) in any global cycle varies by $\sigma_k^0$ (J) such that $E_k^0 = E \pm \sigma_k U ~ \forall ~ k\in\mathcal{K}$ where $U \sim \mathcal{U}(0,1)$ (J). To put these numbers into perspective, modern batteries are rated in terms of voltage (V) and milliampere-hours (mAH). An average consumption of 20J per global cycle for 10 cycles would imply a total consumption of 200J. For a battery rated at 5V, this represents a consumption of 11.1 mAH which for a 2000 mAH battery, represents 0.36\% of the maximum load.  
	
	In the following sub-sections, we present results for an MEL system comprising $K=$ 20 learners tested for global cycle times of $T=5$s to $T=40$s in steps of 5 seconds and for average energy values of $E = \{10,~20,~30\}$J with $\sigma_k^0 = 2.5$ (J). The results are discussed for our proposed HA asynchronous (HA-Asyn) scheme with PL and compared to the HA synchronous (HA-Sync) scheme in \cite{Mohammad2019a} and the performance that would have been achieved if the HU random equal task allocation approach was used (HU-Sync/Asyn) such as the one described in \cite{Wang2019}.   
		
	\subsubsection{Convergence Proof}
	\begin{figure*}[t!]
		\centering
		\includegraphics[width=\textwidth,keepaspectratio]{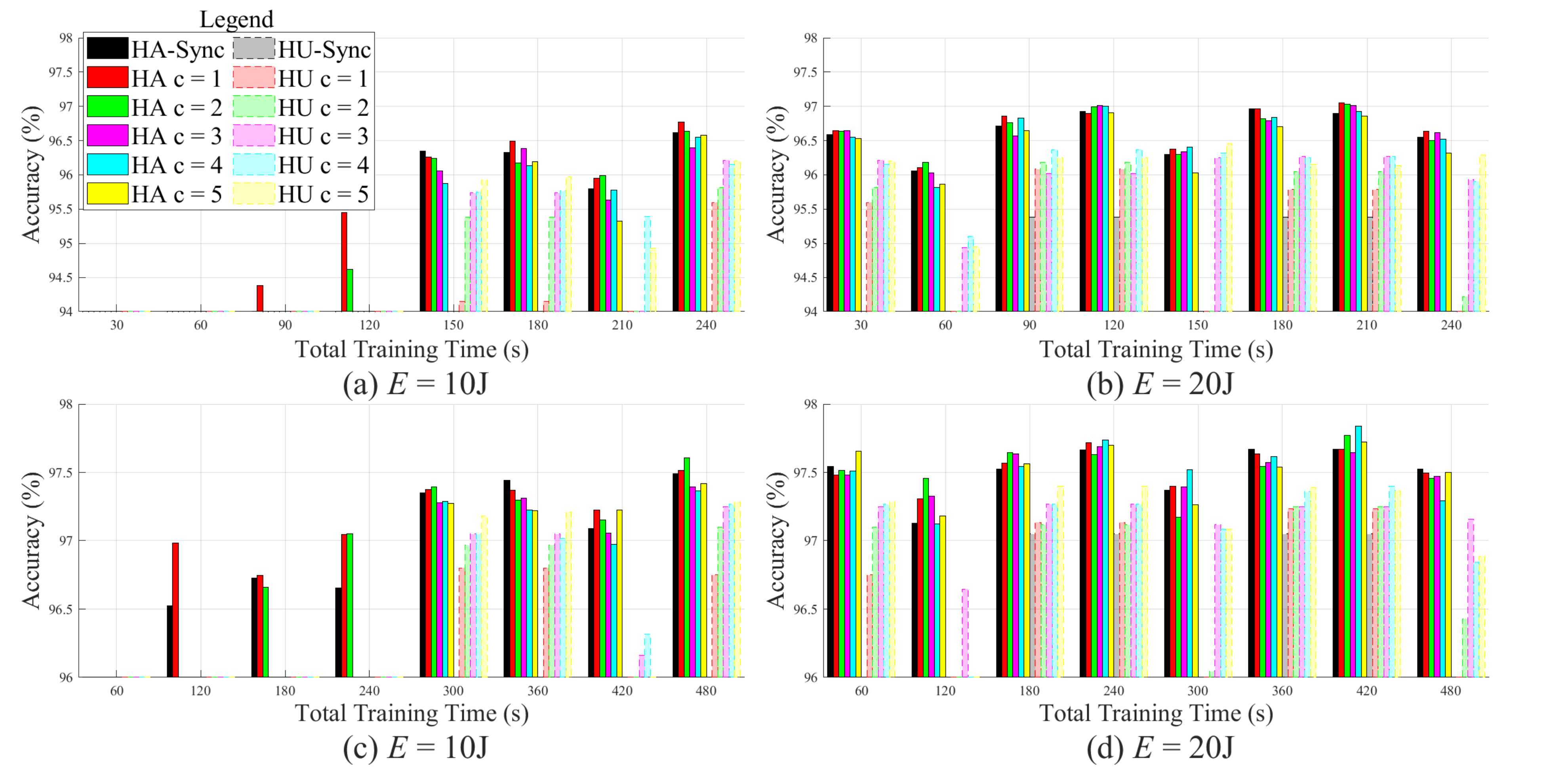}
		\caption{Final validation accuracy values after 6 and 12 global updates, respectively, for various training times for average energy consumption limits $E=10$J and $E=20$J, respectively.}
		\label{figure14label}
	\end{figure*}
	\begin{figure*}[t!]
		\centering
		\includegraphics[width=\textwidth,,height=\textheight,keepaspectratio]{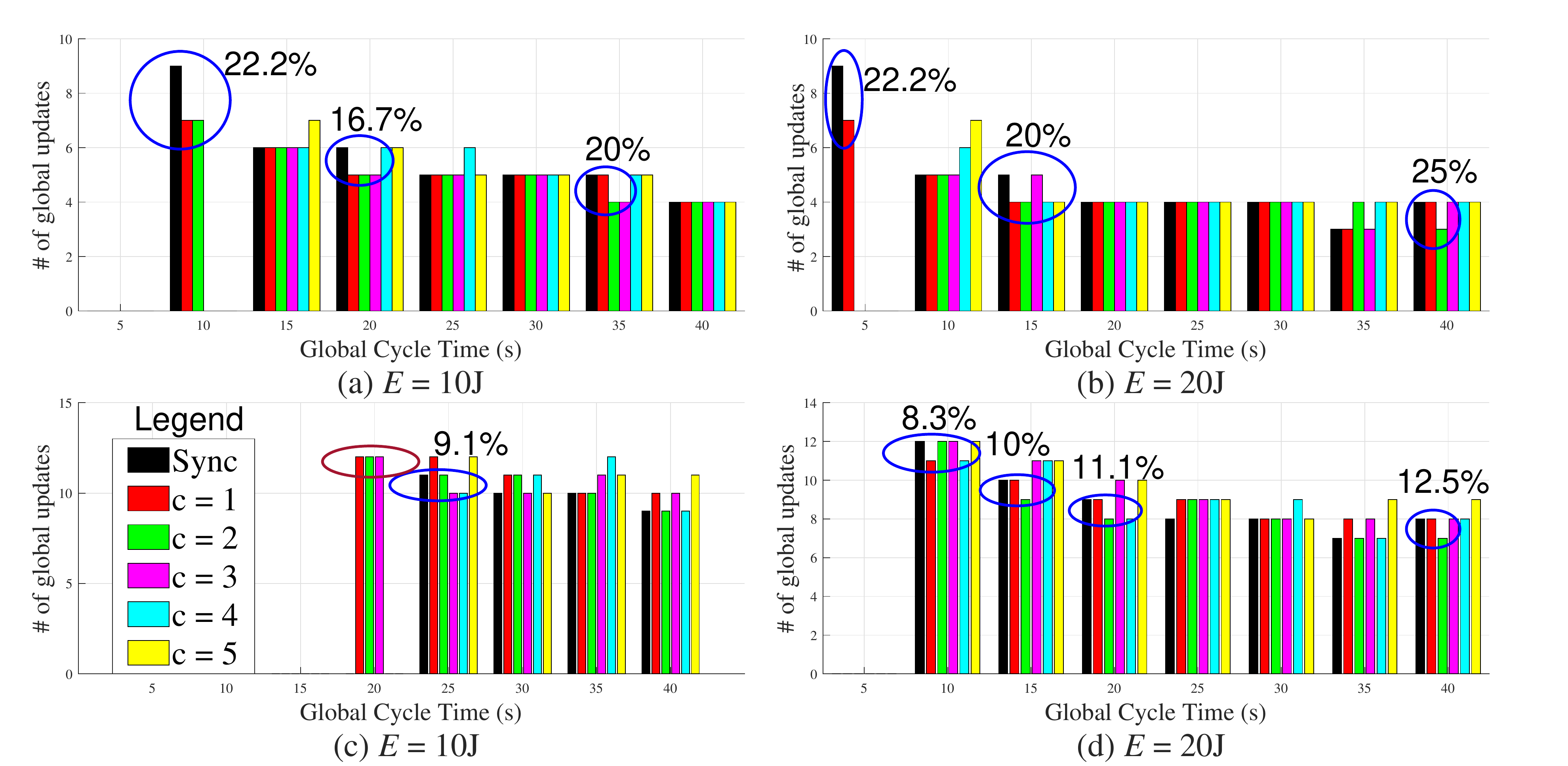}
		\caption{Final validation accuracy values after 6 and 12 global updates, respectively, for various training times for average energy consumption limits $E=10$J and $E=20$J, respectively.}
		\label{figure15label}
	\end{figure*}

	Figure \ref{figure06label} shows the plots for learning accuracy for the synchronous case only where HA-Sync represents solution obtained using the approach in \cite{Moha2020b} and HA-Asyn represents solutions to problem (\ref{Eq_13_RelaxedProblem}) with $c$ set to zero. The HU-Sync/Asyn plots represent the HU solution for both approaches. As observable, we have confirmed that our solution in this paper converges to the synchronous case. Moreover, for extremely low values of energy and time (10J and 10s, respectively), the proposed approach works better than the approach to solve the synchronous problem. Last but not least, for the synchronous case ($c=0$), the HA scheme always works better than the HU scheme where the HU scheme fails to converge on several occasions. 	
	
	Figure \ref{figure06label} plots the learning accuracy progression for different values of staleness (including the synchronous case with $c=0$) with varying values of $T$ and $E$ for both, the HA and HU schemes. The purpose of these figures is to demonstrate the importance of utilizing HA schemes, especially when the resources are limited. Observe that the HA schemes generally converge faster and reach a higher level of final validation accuracy. The plots also demonstrate the usefulness of having a staleness aware asynchronous scheme. 
	
	For example, as we can see from Fig. \ref{figure07label}a that when $T=10$s and $E=10$J, the HA-Asyn with $c=1$ and $c=2$ requires 5 global updates to reach a 94\% accuracy whereas the HA-Sync requires 7 updates, a reduction in time of 29\%. Similarly, 6 updates are required to achieve a 95\% accuracy with $c=1$ whereas the HA-Sync needs 8 updates representing a reduction of 25\%. Because this is not clear from Fig. \ref{figure07label} in general, the next subsection demonstrates these gains more clearly using bar charts.
	
	\subsubsection{Validation Accuracy}
	\begin{figure*}[t!]
		\centering
		\includegraphics[width=\textwidth,,height=\textheight,keepaspectratio]{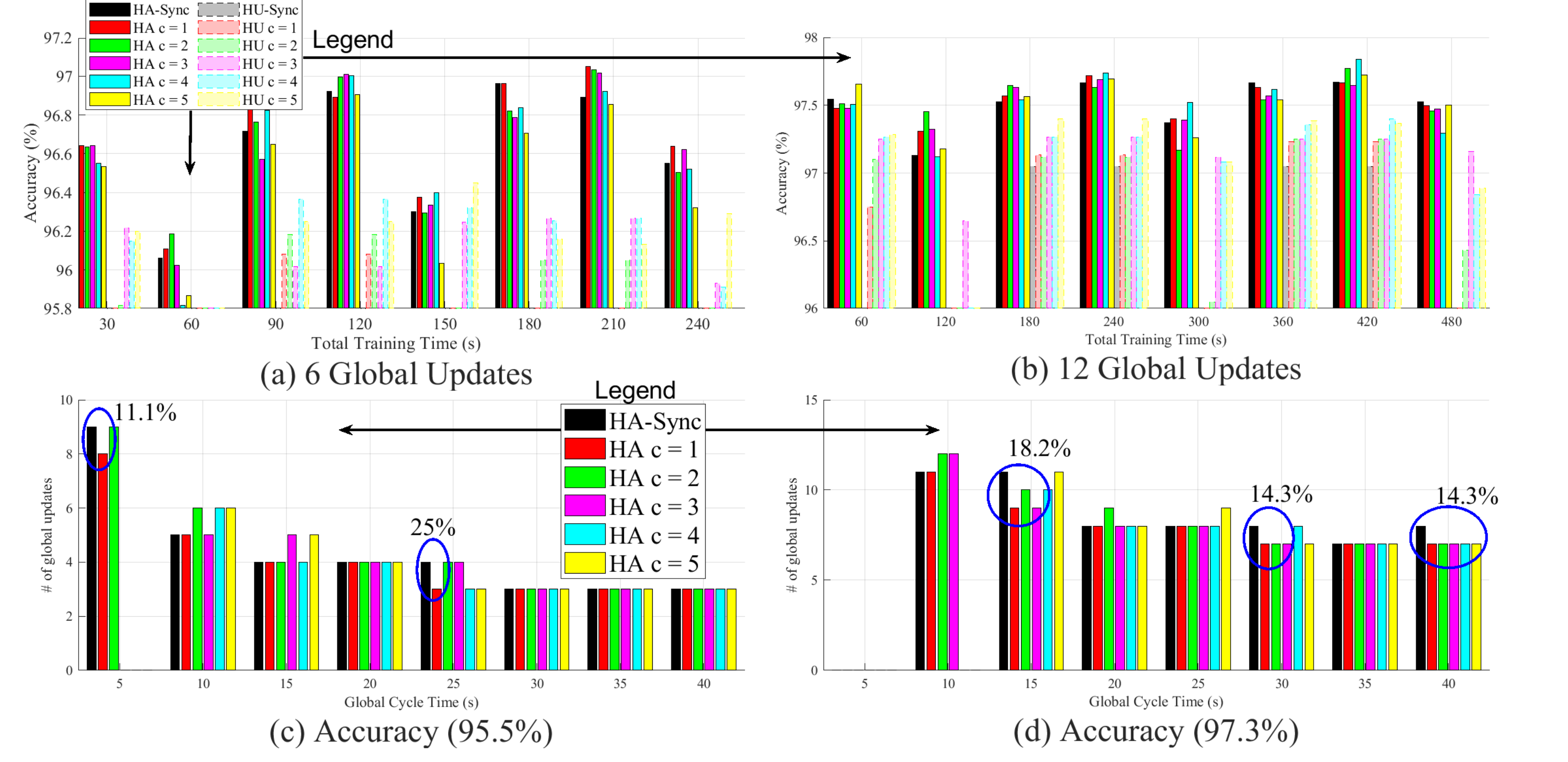}
		\caption{Results for an MEL system with average energy $E=30$J. Final validation after (a) 6 global updates and (b) 12 global updates, and number of global updates to reach an accuracy threshold of (a) 95.5\% and (b) 97.3\%.}
		\label{figure16label}
	\end{figure*}
	
	In this part, we focus on some specific metrics such as final validation accuracy after a set number of global updates or the number of updates required to reach an accuracy threshold. These results are presented for all schemes for the case where the devices have a low average energy consumption of about 10J and a higher consumption of 20J per global cycle for each learner for all global cycle times.  This study is important because less number of updates for a given global cycle time constraint results in lower total training time for a given time constraint.
	
	Figs. \ref{figure14label}a and \ref{figure14label}b present the final validation accuracy achieved after a total of 6 global updates with different total training times for the case when the average energy consumption limit per cycle per learner $E=10$J and $E=20$J, respectively. The final validation accuracies for those two settings after 12 global cycles is given in  Fig. \ref{figure14label}c and Fig. \ref{figure14label}d, respectively. The case when $c = 0$ implies the HA/HU-Sync scheme and cases when $c>0$ represent the HA/HU-Asyn schemes. 
	
	Overall, it can be seen that the HA schemes provide a better performance compared to the best performance of the HU schemes. In most regions, the best performance is provided by the HA-Asyn schemes. For example, in the low resource region with $T = 15$ and $20$s and $E = 10$J, after 6 global updates the HA-Asyn with $c=1$ provides a best accuracy of 94.4\% and 95.4\%, respectively, whereas the HA-Sync fails to reach 94\%. The HA-Asyn with $c=2$ crosses 94.5\% for $T=20$s. For a higher energy of $20$J, the HA-Asyn scheme with $c=$ 1-3 is able to provide an accuracy of 97\% when $T=20$s and for $c$ values of 2-4 for $T=35$s. After 12 global cycles, for $E=10$J and $T=10$s and $20$s, the HA-Async schemes with $c=1$ and $c=\{1,2\}$, respectively, are able to provide a significantly higher final accuracy with a difference of more than 0.4\%. Similarly when $E=20$J, for global cycle times of T=$\{5,~,10,~15,~20,~25\}$s, the HA-Asyn scheme provides the best validation accuracy with corresponding staleness $c=\{5,2,2,4,4\}$. with the difference ranging from 0.1-0.3\%.
	
	Fig. \ref{figure15label} displays these differences clearly by showing the number of updates required to reach a 95.5\% accuracy for various global cycle times $T$ for $E=10$J (Fig. \ref{figure15label}a) and $E=20$J (Fig. \ref{figure15label}b). For the same settings, the number of updates required to achieve an accuracy of 97.3\% are plotted in Figs. \ref{figure15label}c and Fig \ref{figure15label}d, respectively. For example, when $E=10$J and $T=10$s, a final accuracy of 95\% can be achieved in 7 global updates with HA-Asyn as opposed to 9 updates with the HA-Sync, representing a reduction of 22.2\%. This means that the MEL system needs to train for 20s less to reach the same accuracy. Similarly, after 12 global update cycles, an MEL system with $E=10$J can reach an accuracy of 97.3\% with HA-Asyn ($c=1,2,3$) whereas the HA-Sync cannot reach that level of accuracy. Moreover, for $T=25$s, the HA-Asyn ($c=3,4$) can 97.3\% in 10 updates as compared to 11, representing a reduction of 9.1\% or 25s.
	
	\red{change }Fig. \ref{figure16label} presents the results for the high energy region of $E=30$J. Figs. \ref{figure16label}a and \ref{figure16label}b show the final validation accuracy achieved by the HA  
	
	\subsection{Discussions}
	Despite these gains, it can be seen that in some situations, the HA-Sync approach provides the best results. For example, when $T=25$s and $E=20$J, the least time to reach an accuracy of 97.2\% is by the HA-Sync scheme. It also reaches an accuracy of 95.5\% and 97\% with the same number of updates for values $T$ in the range $25-35$s for $E=10$J and $20$J. Moreover, the best final validation accuracy after 12 global cycles is provided by HA-Sync for $E=10$J when $T=30$s and also for $E=20$J when $T=30$s and $T=40$s. 
	
	Although it is difficult to see a concrete trend, we may conclude that the HA-Asyn works best when the resources are at their lowest and the synchronous approach may fail, It can also provide gains when one resource is low and the other high, especially, when energy is abundant and time is low. This works for the scenarios where FL has been proposed for devices that are charging and not on battery power. In the medium range of both resources, time and energy, the HA-Sync works best but this not typical of the edge environment. When both resources are abundant, also not typical but may happen for example, if the main objective is privacy and not minimal delay, where learning takes place on devices that are charging. 
	
	To conclude, we suggest the following two-step process to select the best scheme out of the HA-Sync and HA-Asyn:
	\begin{enumerate}
		\item If either resource, time or energy is low, choose the HA-Asyn approach. If both resources are in the medium range, then go for the HA-Sync. If both resources are high, choose the HA-Asyn approach.
		\item If the HA-Sync is chosen, simply do cross-validation on the ML model and associated hyper-parameters such as model size, learning rate, regularization, etc. On the other hand, when HA-Asyn is used, the parameter $c$ should be added to the cross-validation. In the very low resource region, checking with $c=1$ $c=2$ may suffice whereas in case when one or more of the resources are abundant, a set from the range $c\geq3$ may be used. 
	\end{enumerate}
	
	\section{Conclusion}
	\label{Sec07__Conclusion}
	This paper extends the research efforts towards establishing the novel MEL paradigm by proposing a HA asynchronous (HA-Asyn) approach.
	It was shown that for an MEL system with learners performing an asynchronous number of updates, maximizing the average number of updates while controlling the maximum mutual staleness can improve accuracy. 
	A two-step solution based on the SAI method was designed and through extensive simulations using the well-known MNIST dataset, it was shown that the proposed HA-Asyn scheme provides better validation accuracy and a faster progression than the HU scheme and for many scenarios, provides a better performance than the HA-Sync scheme with gains of up-to 25\%. Finally, strategies were proposed to select between the HA-Sync and HA-Asyn schemes with recommendations on cap the staleness.
	
	\ifCLASSOPTIONcaptionsoff
	\newpage
	\fi
	
	\bibliographystyle{IEEEtran}
	\bibliography{main}
	
	\begin{IEEEbiography}[{\includegraphics[width=1in,height=1.25in,clip,keepaspectratio]{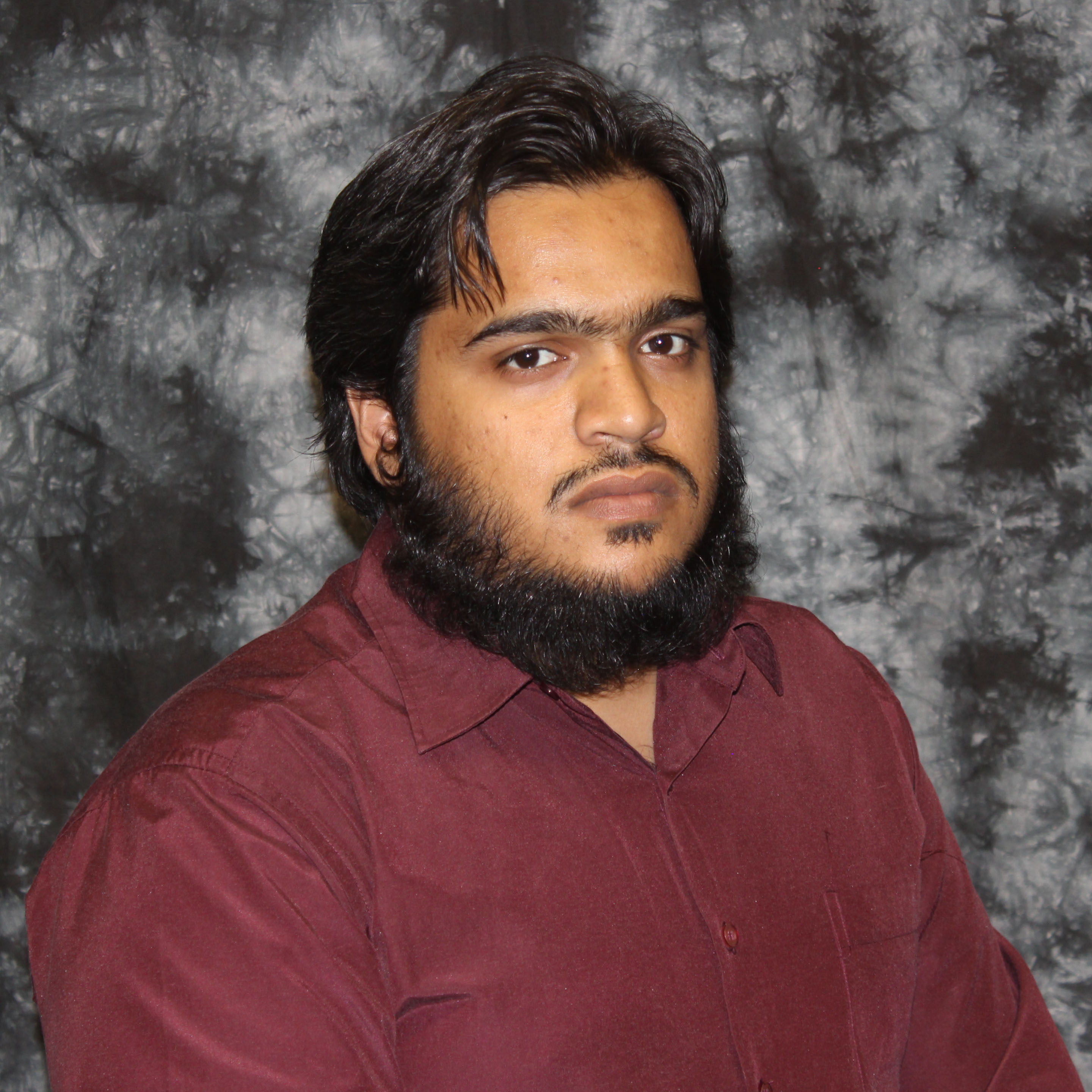}}]{Umair Mohammad} (S '12) Umair Mohammad received his Bachelor's degree in Electrical Engineering and Master's degree in Telecommunication Engineering from the King Fahd University of Petroleum and Minerals (KFUPM) in 2013 and 2016, respectively. Currently, Umair is a PhD candidate in the department of Electrical and Computer Engineering at the University of Idaho (UI) and a research assistant (RA) for the National Institute for Advanced Transportation Technology (NIATT). Umair’s areas of interest include wireless communication, edge computing, machine learning (ML) and distributed ML for wireless edge networks.
	\end{IEEEbiography}
	\begin{IEEEbiography}[{\includegraphics[width=1in,height=1.25in,clip,keepaspectratio]
			{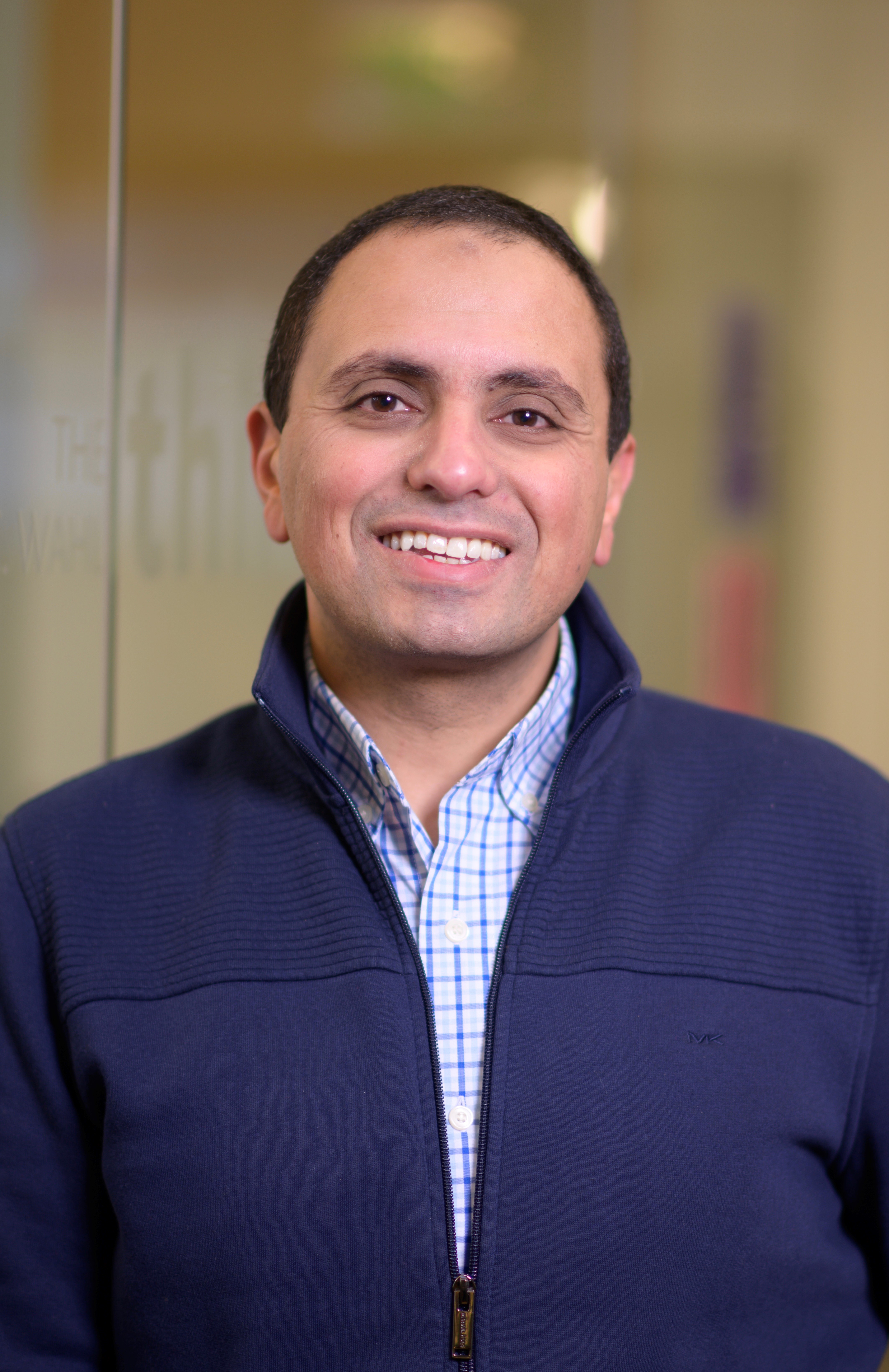}}]{Sameh Sorour} (S '98, M '11, SM'16) is an Assistant Professor at University of Idaho, USA. He received his B.Sc. and M.Sc. degrees from Alexandria University in 2002 and 2006, respectively, and his PhD from University of Toronto in 2011. His PhD thesis was nominated for the Governor General’s Gold Medal Award. After his graduation, he held a MITACS industrial postdoctoral fellowship with Siradel Canada and University of Toronto. Prior to moving to University of Idaho in 2016, he held another postdoctoral fellowship at King Abdullah University of Science and Technology (KAUST), and an assistant professor position at King Fahd University of Petroleum and Minerals (KFUPM). During his PhD and postdoctoral fellowships, he led several research projects with industrial partners and government agencies, such as LG Korea, the European Space Agency, the Canadian National Institute for the Blind (CNIB), and Siradel France. Dr. Sorour is currently a senior IEEE member and an Editor for IEEE Communications Letters. His research and educational interests lie in the broad areas of advanced computing, learning, and networking technologies for cyber-physical and autonomous systems. Topics of particular interest include cloud/edge/IoT networking, computing, learning, and their applications in multimodal/coordinated autonomous driving, autonomous/electric mobility on demand systems, layered/virtualized management of future transportation networks, cyber-physical systems, and smart energy and healthcare systems.
	\end{IEEEbiography}
	\begin{IEEEbiography}[{\includegraphics[width=1in,height=1.25in,clip,keepaspectratio]
			{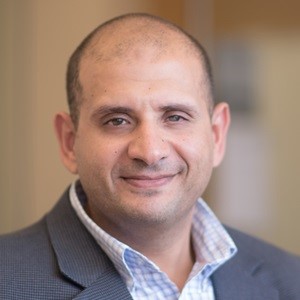}}]{Mohamed Hefeida} is a clinical assistant professor with the ECE department at the University of Idaho. He received his doctorate in electrical and computer engineering from the University of Illinois at Chicago in 2013, where he also worked as a visiting assistant professor for one year. Before joining the ECE department at the University of Idaho, Dr. Hefeida was an assistant professor of electrical engineering at the American University of the Middle East in Kuwait. His research interests span a wide spectrum of networking and communication techniques, with particular interest in Wireless Sensor Networks, Internet of Things (IoT), Cross-Layer Design, and Data Management.
	\end{IEEEbiography}
	

	
	


\end{document}